\date{\today} 
\title{ 
    Learning structured approximations of combinatorial optimization problems.
}
\author{Axel Parmentier \\ CERMICS, Ecole des Ponts, Marne-la-Vallée, France \\ \normalsize{\url{axel.parmentier@enpc.fr}}}
\date{\today}
\begin{document} 

\maketitle

\begin{abstract}
    Machine learning pipelines that include a combinatorial optimization layer can give surprisingly efficient heuristics for difficult combinatorial optimization problems.
    Three questions remain open: 
    which architecture should be used, 
    how should the parameters of the machine learning model be learned,
    and what performance guarantees can we expect from the resulting algorithms?
    Following the intuitions of geometric deep learning, we explain why equivariant layers should be used when designing such pipelines, and illustrate how to build such layers on routing,  scheduling, and  network design applications.
    We introduce a learning approach that enables to learn such pipelines when the training set contains only instances of the difficult optimization problem and not their optimal solutions, and show its numerical performance on our three applications. 
    Finally, using tools from statistical learning theory, we prove a theorem showing the convergence speed of the estimator. 
    As a corollary, we obtain that, if an approximation algorithm can be encoded by the pipeline for some parametrization, then the learned pipeline will retain the approximation ratio guarantee. 
    On our network design problem, our machine learning pipeline has the approximation ratio guarantee of the best approximation algorithm known and the numerical efficiency of the best heuristic.

    \end{abstract}

\section{Introduction}

In the last few years, more and more attention have been given to the construction of machine learning algorithms which, given an input $x$, can predict an output $y$ in a combinatorially large set $\calY(x)$. 
An approach that is getting more popular to address this problem consists in embedding a combinatorial optimization (CO) layer in a machine learning pipeline.
As illustrated on Figure~\ref{fig:pipeline}, the resulting pipeline typically chains a statistical model, a combinatorial optimization problem, and possibly a post-processing algorithm.

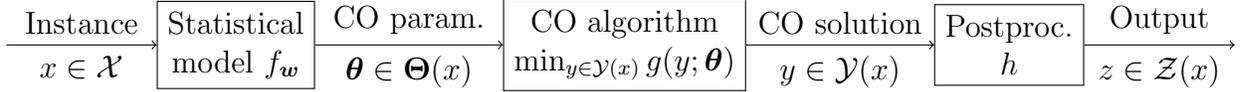
\begin{figure}
    \begin{tikzpicture}
        \node (o) {};
        \node[draw,right=2cm of o,align=center] (m) {Statistical \\ model $f_{\bfw}$};
        \node[draw,right=2.5cm of m,align=center] (co) {CO algorithm \\  $\min_{y \in \calY(x)} g(y;\bftheta)$};
        \node[draw,right=2.5cm of co, align=center] (pp) {Postproc. \\ $h$};
        \node[right=2cm of pp] (d) {};

        \draw[->] (o) to node[above]{Instance} node[below]{$x \in \calX$} (m);
        \draw[->] (m) to node[above]{CO param.} node[below]{$\bftheta \in \bfTheta(x)$} (co);
        \draw[->] (co) to node[above]{CO solution} node[below]{$y \in \calY(x)$} (pp);
        \draw[->] (pp) to node[above]{Output} node[below]{$z \in\calZ(x)$} (d);

    \end{tikzpicture}
    \caption{A machine learning pipeline with a combinatorial optimization (CO) layer}
    \label{fig:pipeline}
\end{figure}

Such pipelines can be used as heuristics for difficult combinatorial optimization problems.
Let us consider a combinatorial optimization problem of interest
\begin{equation}\label{eq:hardProblem}\tag{Pb}
    \min_{z \in \calZ(x)} C(z,x).
\end{equation}
Here $x$ is an instance in a \emph{set of instance} $\calX$, and $\calZ(x)$ denotes the set of feasible solutions of $x$. 
Contrary to what is usual in combinatorial optimization, we include the instance $x$ in the objective function $c(z;x)$. 

When we use our machine learning pipeline to solve~\eqref{eq:hardProblem}, we use the statistical model $f_{\bfw}$ to obtain the parameter $\bftheta$ of the auxiliary combinatorial optimization problem 
\begin{equation}\label{eq:easyProblem}\tag{CO-layer}
    \min_{y \in \calY(x)} g(y,\bftheta),
\end{equation}
and then decode the solution $y$ of this problem into a solution $z \in \calZ(x)$ of the initial problem.
Such a pipeline is useful when we have much more efficient algorithms for our combinatorial optimization layer problem~\eqref{eq:easyProblem} than for the problem of interest~\eqref{eq:hardProblem}. 

Since the preprocessing $h$ is assumed deterministic, it will not play a major role on the learning algorithm. It will therefore be convenient to bring back the cost on $\calY(x)$. For $y$ in $\calY(x)$, we define
$$ c(y,x) = C(h(y),x). $$

\paragraph*{Running example: Two stage spanning tree}
Let $G = (V,E)$ be an undirected graph, and $S$ be a finite set of scenarios.
% For each edge $e$ in $E$, we have a first stage cost $c_e$, and a second stage cost $d_{es}$ which depends on the scenario $s$ in $S$.
The objective is to build a spanning tree on $G$ of maximum cost on a two stage horizon. 
Building edge $e$ in the first stage costs $c_e \in \bbE$, while building it in the second stage under scenario $s$ costs $d_{es} \in \bbR$.
The decision maker does not know the scenario $s$ when it chooses which first stage edges to build.
Denoting $\calT$ the set of spanning trees, we can formulate the problem as
\begin{equation}
    \label{eq:twoStageSpanningTreeProblem}
    \min \Big\{\sum_{e \in E_1} c_e + \frac{1}{|S|}\sum_{e \in E_s}d_{es} \colon E_1 \cap E_s = \emptyset \text{ and } (V,E_1\cup E_s) \in \calT \text{ for all $s$ in $S$} \Big\}.
\end{equation}
% \begin{subequations}
%     \begin{alignat}{2}
%         \min_{\bfx,\bfy}\, & \bfc^{\top}\bfx + \frac{1}{|S|} \sum_{s \in S} \bfd_s^{\top}\bfy_s \\
%         \mathrm{s.t.} \,& \bfx + \bfy_s \in \calT &\quad& \text{for all $s$ in $S$} \\
%         & \bfy_s \in \{0,1\}^E && \text{for all $s$ in $S$} \\
%         & \bfx \in \{0,1\}^E 
%     \end{alignat}
% \end{subequations}
% where $\bfx = (x_e)_{e \in E}$ and $\bfy_s = (y_{es})_{e \in E}$ are vectors of binary variables, with $x_e =1$ if and only if $e$ is in the first stage solution, and $y_{es} = 1$ if and only if $e$ is in the second stage solution under scenario $s$. 
When we restrict ourselves to $c_e \leq 0$ and $d_{es} \leq 0$, we obtain the two stage maximum weight spanning tree. 
\citet{escoffierTwostageStochasticMatching2010} show that this restriction is APX-complete, and introduce a 2-approximation algorithm for the maximization problem, which translates into a 1/2-approximation algorithm for the minimization problem.

% \paragraph{Easy problem: } 
\paragraph{Running example pipeline}
Remark that an optimal solution of the single scenario version of the problem
\begin{equation}
    \label{eq:twoStageSpanningTreeProblem_easy}
    \min \Big\{\sum_{e \in E_1} \bar c_e + \frac{1}{|S|}\sum_{e \in E_2} \bar d_e \colon E_1 \cap E_2 = \emptyset \text{ and } (V,E_1\cup E_2) \in \calT \Big\}
\end{equation}
% \begin{subequations}
%     \label{eq:twoStageSpanningTreeProblem_easy}
%     \begin{alignat}{2}
%         \min_{\bfx,\bfy}\, & \bar\bfc^{\top}\bfx + \bar\bfd^{\top}\bfy \\
%         \mathrm{s.t.} \,& \bfx + \bfy \in \calT  \\
%         & \bfx,\bfy \in \{0,1\}^E 
%     \end{alignat}
% \end{subequations}
is a minimum weight spanning tree on $G$ with edge weights $\min(\bar c_e, \bar d_e)$. It can therefore be easily solved using Kruskal's algorithm, and we therefore suggest using~\eqref{eq:twoStageSpanningTreeProblem_easy} as combinatorial optimization layer~\eqref{eq:easyProblem}. 
Hence, we have $\bftheta = (c_e,d_e)_{e \in E}$ and $\bfTheta(x) = \bbR^{2E}$.

Our decoder $h$ rebuilds a solution $z$ of~\eqref{eq:twoStageSpanningTreeProblem} from a solution $y =(\bar E_1,\bar E_2)$ of~\eqref{eq:twoStageSpanningTreeProblem_easy}. It relies on the following result.
Given a forest $F$, Kruskal's algorithm can be adapted to find a minimum weight spanning tree containing $F$.
We take $\bar E_1$ as the first stage solution of~\eqref{eq:twoStageSpanningTreeProblem}, and use the variant of Kruskal's algorithm with edge weights $d_s$ to rebuild the $E_s$.
We then compare this solution to the optimal solution $E_1 = \emptyset$ and return the best of the two as $z$.
% We therefore build a candidate solution $(E_1,(E_s){s \in S})$ by taking $E_1 = $
% Our decoder $h$ works as follows: given the solution $y =(E_1,E_2)$

\paragraph{Structure of the combinatorial optimization layer.} When building a solution pipeline for a combinatorial algorithm, we typically want our pipeline to be able to address instances of very different size: Instances of our running example may have $20$ or $1000$ edges.
It means that the graph $G$ used in the combinatorial optimization layer~\eqref{eq:twoStageSpanningTreeProblem_easy} depends on the instance $x$ of~\eqref{eq:twoStageSpanningTreeProblem}, and hence the parameter $\bftheta$ belongs to the set $\bbR^{2E}$ which also depends on $x$.
This is the reason why, in our pipeline, the set of solutions $\calY(x)$ and the parameter space $\bfTheta(x)$ both depend on $x$.
On the contrary, since we want to use the same model and hence the same $f_{\bfw}$ on different instances, the space $\calW$ does not depend on $x$.
This raises the question of \emph{how to build statistical model $f_{\bfw}$ whose output dimension depends on the input dimension}.
More generally, such an approach can work only if~\eqref{eq:easyProblem} retains most of the ``structure'' of~\eqref{eq:hardProblem}.

\paragraph{Learning algorithm.}
Finally, the purpose of the learning algorithm is to find a parameter $w \in \calW$ such that the pipeline outputs a good solution of~\eqref{eq:hardProblem}. 
Approaches in the literature typically use a \emph{learning by imitation approach}, with a training set $(x_1,z_1),\ldots,(x_n,z_n)$ containing instances of~\eqref{eq:hardProblem} and their hard problem solution.
A drawback of such an approach is that it requires another solution algorithm for~\eqref{eq:hardProblem} to compute the $z_i$.
In this paper, we focus on the \emph{learning by experience} setting where the training set contains only instances $x_1,\ldots,x_n$.

\paragraph{Related works.}
% Combinatorial optimization layers in deep learning are an active field of research.
The interactions between combinatorial optimization and machine learning is an active research area~\citep{bengioMachineLearningCombinatorial2021}. Combinatorial optimization layers in deep learning belong to the subarea of end-to-end learning methods for combinatorial optimization problems recently surveyed by~\citet{kotaryEndtoEndConstrainedOptimization2021}. 
This field can be broadly classified in two subfields.
Machine learning augmented combinatorial optimization uses machine learning to take heuristic decisions within combinatorial optimization algorithms.
We survey here combinatorial optimization augmented machine learning, which inserts combinatorial optimization oracles within machine learning pipelines.

Structured learning approaches were the first to introduce these methods in the early 2000s~\citep{nowozinStructuredLearningPrediction2010} in the machine learning community.
They mainly considered maximum a posteriori problems in probabilistic graphical models as combinatorial optimization layers, with applications to computer vision, and sorting algorithms with applications to ranking.
They were generally trained using the structured Hinge loss or a maximum likelihood estimator.
A renewed interest for optimization layers in deep learning pipeline has emerged in the last few years has emerged in the machine learning community, and notably continuous optimization layers~\citep{amosOptNetDifferentiableOptimization2017,blondelEfficientModularImplicit2022}. 
% These approach have been considered mostly in the machine learning community and 
Remark that these pipelines are generally trained using a learning by imitation paradigm.

We focus here on combinatorial optimization layers. 
Among these, linear optimization layers have received the most attention. %, mostly in a learning by imitation paradigm.
Two challenges must be addressed.
First, since the mapping that associated to the objective parameter vector $\bftheta$ the output $y$ is piecewise constant, and deep learning networks are generally trained using stochastic gradient descent, meaningful approximations of the must be proposed gradient~\citep{vlastelicaDifferentiationBlackboxCombinatorial2020}.
Second a loss quantifying the error between its target must be proposed.
\citet{blondelLearningFenchelYoungLosses2020} address these challenges with an elegant solution based on convex duality: the linear objective is regularized with a convex penalization, which leads to meaningful gradients. Fenchel Young inequality in convex duality then gives a natural definition of the loss function.
\citet{berthetLearningDifferentiablePerturbed2020} have shown that this approach can be extended to the case where a random perturbation is added to the objective instead of a convex regularization.
When it comes to integer linear programs, \citet{mandi2020smart} suggest using the linear relaxation during the learning phase.

The author recently introduced the idea of building heuristics for hard combinatorial optimization problems with pipelines with combinatorial optimization layers~\citep{parmentierLearningApproximateIndustrial2019}.
The closest contribution to our learning by experience setting is the smart predict then optimize method of \citet{elmachtoubSmartPredictThen2021}. It considers the case where there is no decoder and the cost function $c(y,x) = \theta^* x$ is actually the linear objective of the combinatorial optimization layer~$g(y,\bftheta) = \bftheta y$ for an unknown true parameter $\theta^*$. They propose a generalization of the structured Hinge loss to that setting. 

However, to the best of our knowledge, two aspects of pipelines with combinatorial optimization layers have not been considered in the literature. 
First, the general learning by experience setting where only instances of the hard optimization problems are available has not been considered.
Second, there is no guarantee on the quality of the solution returned by the pipeline. The purpose of this paper is to address these two issues.

\paragraph{Contributions}
We make the following contributions.
\begin{enumerate}
    \item The design of the learning pipelines, and notably the choice of~\eqref{eq:easyProblem} and $f_{\bfw}$ is critical for the performance of the resulting algorithm. We illustrate on three applications among which our running example how to build such pipelines.
    \item A natural way of formulating the learning problem consists in minimizing the loss defined average cost of the solution $z_i$ returned by our pipeline for instance $x_i$
    $$\frac{1}{n}\sum_{i=1}^nc\Big(\argmin_{y \in \calY(x_i)}g\big(y,f_{\bfw}(x_i)\big),x_i\Big)$$
    We introduce a regularized version of this loss.
    And we show with extensive numerical experiments that, despite the non-convexity of this loss, when the dimension of $\calW$ is moderate, i.e., non-greater than $100$, solving this problem with a global black-box solver leads to surprisingly efficient pipelines.
    \item Leveraging tools from statistical learning theory, we prove the convergence of the learning algorithm toward the approximation with the best expected loss, and an upper bound on the convergence speed. % that does not depend on the diversity instances structures.
    \item We deduce from these statistical learning results that, under some hypotheses on the pipeline, the learned algorithm is an approximation algorithm for~\eqref{eq:hardProblem}. These hypotheses are notably satisfied by our solution pipeline for~\eqref{eq:twoStageSpanningTreeProblem}. 
\end{enumerate}

% \inlAP{We underline the fact that, in this paper, we do not try to approximate difficult constraint. We only try to approximate difficult objectives.}
Remark that, in this paper, we do not try to approximate difficult constraint. We only try to approximate difficult objectives.
The paper is organized as follows. Section~\ref{sec:designingAndExamples} introduce two additional examples and explain how to build pipelines.
Section~\ref{sec:learningProblem} formulates the learning by experience problem and introduces algorithms.
Section~\ref{sec:convergenceEstimator} introduces the convergence results and the approximation ratio guarantee. Finally, Section~\ref{sec:numericExperiments} details the numerical experiments.

\section{Designing pipelines with combinatorial optimization layers}
\label{sec:designingAndExamples}

In this section, we give a methodology to build pipelines with combinatorial optimization layers.
We illustrate it on our running example and on two applications previously introduced by the author.
% from the literature to illustrate the methodology.
% This section introduces two additional hard problems~\eqref{eq:hardProblem}, as well as the easy problems~\eqref{eq:easyProblem} used to approximate them.
% These were introduced in previous contributions~\citep{parmentierLearningApproximateIndustrial2019,parmentierScheduling2020}. 
We start with the description of these applications, which follows the papers which introduced them~\citep{parmentierLearningApproximateIndustrial2019,parmentierScheduling2020}.

\subsection{Stochastic vehicle scheduling problem.}
\label{sub:stoVSPproblem}

\paragraph{Stochastic vehicle scheduling problem}
Let $V$ be a set of tasks that should be operated using vehicles. 
For each task $v$ in $V$, we suppose to have a scheduled start time $\tb[v]$ in $\bbZ_+$ and a scheduled end time $\te[v]$ in $\bbZ_+$. We suppose $\te[v] > \tb[v]$ for each task $v$ in $V$. For each pair of tasks $(u,v)$, the travel time to reach task $v$ from task $u$ is denoted by~$\ttr[(u,v)]$.
Task $v$ can be operated after task $u$ using the same vehicle if
\begin{equation}\label{eq:stovspArc}
\tb[v] \geq \te[u] + \ttr[(u,v)].
\end{equation}
We introduce the digraph $D = (V,A)$ with vertex set $V = T \cup\{o,d\}$ where $o$ and $d$ are artificial origin and destination vertices.
The arc set $A$ contains the pair $(u,v)$ in $T^2$ if $v$ can be scheduled after task $u$, as well as the pairs $(o,v)$ and $(v,d)$ for all $v$ in $V$. 
An $o$-$d$ path $P$ represents a sequence of tasks operated by a vehicle.
A feasible solution is a partition of $V$ into $o$-$d$ paths.
If we denote by $c(P,\Gammah)$ the cost of operating the sequence corresponding to the $o$-$d$ path $P$, and by $\calPod$ the set of $o$-$d$ paths, the problem can be modeled as follows.
\begin{subequations}
    \label{eq:masterProblem}
    \begin{alignat}{2}
        \min_z\enskip & \sum_{P \in \calPod} c(P;\Gammah) z_P,
        & \quad &\\
        \mathrm{s.t.}\enskip & \sum_{P \ni v} z_P = 1, && \forall v \in V(\Gammah) \backslash\{o,d\}, \label{eq:coverConstraint} \\
        & z_P \in \{0,1\}, && \forall P \in \calPod(\Gammah), \label{eq:integer}
    \end{alignat}
\end{subequations}
% \comAP{Maybe put the remaining in appendix.}

Up to now, we have described a generic vehicle scheduling problem. Let us now define our stochastic vehicle scheduling problem by giving the definition of $c(P,\Gammah)$.
Let $\Omega$  be a set of scenarios.
For each task $v$, we have a random start time $\xib[v]$ and a random end time $\xie[v]$, and for each arc $(u,v)$, we have a random travel time $\xitr[(u,v)]$. 
Hence, $\xib[v](\omega)$, $\xie[v](\omega)$, and $\xitr[(u,v)](\omega)$ are respectively the beginning time of $v$, end time of $v$, and travel time between $u$ and $v$ under scenario $\omega$ in $\Omega$.
We define $\xie[o] = 0$ and $\xib[d] = +\infty$.

Given an $o$-$v$ path $P$, we define recursively the end-time $\tau_P$ of $P$ as follows.
\begin{equation}\label{eq:endTimeStoVsp}
	\tau_P = \left\{
	\begin{array}{ll}
		0, & \text{if $P$ is the empty path in $o$}, \\
		\xie[v] + \max (\tau_Q + \xitr[a] - \xib[v], 0), & \text{if $P = Q + a$ for some path $Q$ and arc $a$.} 
	\end{array}
	\right.
\end{equation}
Equation~\eqref{eq:endTimeStoVsp} models the fact that a task can be operated by a vehicle only when the vehicle has finished the previous task: The vehicle finishes $Q$ at $\tau_Q$, and arrives in $v$ at $\tau_Q + \xitr[a]$ with delay $\max (\tau_Q + \xitr[a] - \xib[v], 0)$.
The total delay $\Delta_P$ along a path $P$ is therefore defined recursively by
\begin{equation}\label{eq:totalDelayStoVsp}
	\Delta_P = \left\{
	\begin{array}{ll}
		0, & \text{if $P$ is the empty path in $o$}, \\
		\Delta_Q + \max (\tau_Q + \xitr[a] - \xib[v], 0), & \text{if $P = Q + a$ for some path $Q$ and arc $a$.} 
	\end{array}
	\right.
\end{equation}
Finally, we define the cost of an $o$-$d$ path $P$ as 
\begin{equation}\label{eq:costPathStoVsp}
	c(P;\Gammah) = \cveh + \cdel \bbE(\tau_P)
\end{equation}
where $\cveh$ in $\bbZ_+$ is the cost of a vehicle and $\cdel$ in $\bbZ_+$ is the cost of a unit delay.
Practically, we use a finite set of scenarios $\Omega$, and compute the expectation as the average on this set.

\paragraph{CO layer: usual vehicle scheduling problem}
The usual vehicle scheduling problem can also be formulated as~\eqref{eq:masterProblem}, the difference being that now the path can be decomposed as the sum of the arcs cost 
\begin{equation}
    \label{eq:pathSumArcCost}
    \overline{c}_P = \sum_{a \in P} \overline{c}_a \quad \text{with} \quad \overline{c}_a \in \bbR. 
\end{equation}
It can be reduced to a flow problem on $D$ and efficiently solved using flow algorithms or linear programming.
In Equation~\eqref{eq:pathSumArcCost} and in the rest of the paper, we use an overline to denote quantities corresponding to the easy problem.

\subsection{Single machine scheduling problem.}
\label{sub:schedulingProblem}

\paragraph{Scheduling problem $1|r_j|\sum_j C_j$.}
$n$ jobs must be processed in a single machine. Jobs cannot be interrupted once launched. 
Each job has a processing time $p_j$, and a release time $r_j$ in $\bbR$. 
That is, job $j$ cannot be started before $r_j$, and once started, it takes $p_j$ to complete it.
A solution is a schedule $s=(j_1,\ldots,j_n)$, i.e., a permutation of $[n]$ that gives the order in which jobs are processed. 
Using the convention $C_{j_0}=0$, the completion time of jobs in $s$ are defined as 
$$ C_{j_i} = \max(r_j,C_{j_{i-1}}) + p_{j_i}.$$
The objective is to find a solution minimizing $\sum_j C_j$.
This problem is strongly NP-hard.

\paragraph{Combinatorial optimization layer: $1||\sum_j C_j$.}
The easy problem is obtained when there is no release time, and only jobs processing times $\overline{p_j}$.
Jobs completion times are therefore given by
$$ \overline C_{j_i} = \overline C_{j_{i-1}} + \overline p_{j_i}.$$
Again, we use an overline to denote quantities of the easy problem.
An optimal schedule is obtained using the shortest processing time first (SPT) rule, that is, by sorting the jobs by increasing $p_{j}$.
% As an illustration, consider the 

% On many applications in the literature, $\calY(x)$ is typically embedded in $\bbR^{d(x)}$, and the objective is linear
% $$ g(y;\bftheta) = \langle \bftheta | y \rangle.$$

% Let us now make a few observations on some specificities of such pipelines.
% A solution algorithm for~\eqref{eq:hardProblem} should be able to address instances of very different sizes. 

% Instead of directly solving~\eqref{eq:hardProblem}, a pipeline such as the one  
% This will be handy because we are going to consider simultaneously several instances of the problem.
% When we use a machine learning pipeline such as the one on Figure~\ref{fig:pipeline} to solve~\eqref{eq:hardProblem}

% Model $f_{\bfw}$ belongs to a parametrized family of statistical model $\{f_{\bfw},\bfw \in \calW  \}$. One spec

\subsection{Constructing pipelines}
\label{sec:setting}
In this section, we explain how to build our learning pipelines.

\paragraph{Combinatorial Optimization layer and decoder.}
The choice of the combinatorial optimization layer and the decoder are rather applications dependent. 
Two practical aspects are important.
First, we must have a practically efficient algorithm to solve~\eqref{eq:easyProblem}.
Second, it must be easy to turn solutions of~\eqref{eq:easyProblem} into solution of~\eqref{eq:hardProblem}. 
That is, either the solutions of~\eqref{eq:easyProblem} and~\eqref{eq:hardProblem} coincide, or we must have a practically efficient algorithm $h$ that turns a solution of~\eqref{eq:easyProblem} into a solution of~\eqref{eq:hardProblem}.

\paragraph{Structure of $x$ and generalized linear model.}
As we indicated in the introduction, a practical difficulty in the definition of our statistical model $f_{\bfw}$ is that the size of its output $\bftheta$ in $\bfTheta(x)$ depends on the instance $x$.
Unfortunately, statistical models generally output vectors of fixed size.
Let us pinpoint a practical way of addressing this difficulty with a generalized linear model.
Let $\calI(x)$ be the \emph{structure} $x$, i.e., the set of dimensions $i$ of $\bfTheta(x)$.
We suggest defining a feature mapping 
$$ \bfphi : (i,x) \mapsto  \bfphi(i,x)$$
that associates to an instance and a dimension $i$ in $\calI(x)$ a feature vector $\bfphi(i,x)$ describing the main properties of $i$ as a dimension of $x$.
We then define
$$ f_{\bfw} : x \mapsto \bftheta \quad \text{with} \quad \bftheta = (\theta_i)_{i \in \calI(x)} \quad \text{and} \quad \langle \bfw | \bfphi(i,x)\rangle. $$
In summary, $f_{\bfw}$ can output parameters $\bftheta$ whose dimension depends on $x$ because it applies the same predictor $(i,x) \mapsto \langle \bfw | \bfphi(i,x) \rangle$ to predict the value for $\theta_i$ for the different dimensions in $\calI(x)$.

\paragraph{Illustration on our applications.}
For instance, let us consider our running example on \emph{two stage spanning tree problem}. 
Given an instance $x$, we must define the first and second stage costs $\bar c_e$ and $\bar d_e$ for each edge $e \in E$.
We can therefore define $\calI(x)$ as $\big\{(e,\texttt{stage})\colon e \in E,\, \texttt{stage} \in \{\texttt{first},\texttt{second}\}\big\}$.
The details of the features used is described in Table~\ref{tab:two_stage_features}.
For the stochastic vehicle scheduling problem, all we have to do is to define the arc costs $\bar c_a$. Hence, we can define $\calI(x) = A$.
And for the single machine scheduling problem, we only have to define the processing times $\bar p_j$.
Hence, $\calI(x) = \{1,\ldots,n\}$.

\begin{table}
    \begin{center}
        \begin{tabular}{p{0.36\textwidth}@{\qquad}>{\centering\arraybackslash}p{0.25\textwidth}@{\qquad}>{\centering\arraybackslash}p{0.28\textwidth}}
            \hline
            \multicolumn{1}{c}{Feature  description } & 
            \multicolumn{1}{c}{$\phi\big((e,\texttt{first}),x)$ } & 
            \multicolumn{1}{c}{$\phi\big((e,\texttt{second}),x)$ } \\
            \hline
            First stage cost & $c_e$ & 0 \\
            Second stage average cost & 0 & $\sum_s d_{es} / |S|$ \\
            Quantiles of second stage cost & 0 & $Q\big[(d_{es})_s\big]$ \\
            Quantiles of neighbors first stage cost & $Q\big[(c_{e'})_{e' \in \delta(u) \cup \delta(v)}\big]$ & 0 \\  
            Quantiles of neighbors second stage cost & 0&  $Q\big[(d_{e's})_{e' \in \delta(u) \cup \delta(v),s \in S}\big]$ \\
            ``Is edge in first stage MST ?''& $ \ind^{\mathrm{MST}}\big(e,(c_e)_{e \in E}\big)$ & 0\\
            Quantiles of ``Is edge in second stage MST quantile ?''& 0& $Q\Big[ \Big(\ind^{\mathrm{MST}}\big(e,(b_{es})_{e \in E}\big)\Big)_{s \in S}\Big]$ \\
            Quantiles of ``Is first stage edge in best stage MST quantile ?''& $Q\Big[ \Big(\substack{ \ind^{\mathrm{MST}}\big(e,(b_{es})_{e \in E}\big) \\ \text{and } c_e \leq d_{es}}\Big)_{s \in S}\Big]$ & 0 \\
            Quantiles of ``Is second stage edge in best stage MST quantile ?''&0 & $Q\Big[ \Big(\substack{ \ind^{\mathrm{MST}}\big(e,(b_{es})_{e \in E}\big) \\ \text{and } c_e > d_{es}}\Big)_{s \in S}\Big]$ \\
            \hline
            \multicolumn{3}{p{\textwidth}}{\scriptsize Note: MST stands for Minimum Weight Spanning Tree, $b_{es} = \min(c_e,d_{es})$, $\calQ[\bfa]$ gives the quantiles of a vector $\bfa$ seen as a sampled distribution, and $\ind^{\mathrm{MST}}(e,(\tilde c_e)_e)$ is equal to $1$ if $e$ is in the minimum spanning tree for edge weights $(\tilde c_e)_e$. }
            
        \end{tabular}
    \end{center}
    \caption{Two stage spanning tree features of edge $e = (u,v)$. \qquad}
    \label{tab:two_stage_features}
\end{table}

\paragraph{Encoding information on an element as part of an instance}
Let us finally introduce two generic techniques to build interesting features.
The features in Table~\ref{tab:two_stage_features} rely on these two techniques.
The first technique enables to compare dimension $i$ to the other ones in $\calI(x)$.
% A first technique enables to \emph{compare $i$ to $\bfrhoh_{e'}$ for $e' \in E_k$}. 
To that purpose, we define a statistic $\alpha : (i,\calX) \mapsto \alpha(i,x)$, and considers $f(\bfrhoh_e)$ as a realization of the random variable
$$\begin{array}{rcl}
    \calA : \calI(x) &\rightarrow& \bbR \\
    i &\mapsto& \alpha(i,x)
\end{array}$$
and take some relevant statistics on the realization $\calA(i)$ of $\calA$, such as the value of the cumulative distribution function of $F$ in $\alpha(i,x)$.
For instance, when considering a job $j$ of $1|r_j|\sum_j C_j$ with parameter $ (r_j,p_j)$, if we define $\alpha(j,x) = r_j + p_j$, we obtain as feature the rank (divided by $n$) of feature $j$ in the schedule where we sort the jobs by increasing $r_j + p_j$, a statistic known to be interesting and used in dispatching rules.

The second technique is to \emph{explore the role of $i$ in the solution of a very simple optimization problem}.  A natural way of building features is to run a fast heuristic on the instance $x$ and seek properties of $i$ in the resulting solution.
For instance, the preemptive version of $1|r_j|\sum_j C_j$, where jobs can be stopped, is easy to solve. Statistics such as the number of times job $j$ is preempted in the optimal solution can be used as features.

\paragraph{Equivariant layers.}
A lesson from geometric deep learning~\citep{bronstein2021geometric} is that good neural network architecture should respect the symmetries of the problem.
Let $\calS$ be a symmetry of the problem. 
A layer $h$ in a neural network is said to be equivariant with respect to $\calS$ if $h(\calS(x)) = \calS(h(x))$. 
In our combinatorial optimization setting, there is one natural symmetry. The solution predicted $y$ should not depend on the indexing of the variables using in the combinatorial optimization problem : Given a permutation of these variables in the instance $x$, the solution $y$ should be the permuted solution.
Combinatorial optimization layers are naturally equivariant with respect to this symmetry. The generalized linear model above is a simple example of equivariant layer.

\section{Learning by experience}
\label{sec:learningProblem}

% Having defined these structured approximations, we now focus on the problem of learning such approximations.
We now focus on how to learn pipelines with a combinatorial optimization layer.
Given a training set composed of representative instances, the learning problem aims at finding a parameter $\bfw$ such that the output $z(\bfw)$ of our pipeline has a small cost. % $c(y(\bfw);x)$.

As we mentioned in the introduction, the literature focuses on the learning by imitation setting. In that case, the training set $(x_1,y_1),\ldots,(x_n,y_n)$ contains instances and target solution of the prediction problem~\eqref{eq:easyProblem}, the learning problem can be 
%seen as a structured learning problem~\citep{nowozinStructuredLearningPrediction2010}.
% The learning problem is generally 
formulated as 
$$ \min \frac{1}{n} \sum_{i=1}^n \ell(\bfvarrhoe_i,y_i) \quad \text{where} \quad \bfvarrhoe_i = \tilde \varphi_w(x_i), $$
and $\ell(\bfvarrhoe,y_i)$ is a loss function. 
Losses that are convex in $\bftheta$ and lead to practically efficient algorithms have been proposed when~\eqref{eq:easyProblem} is linear on $\bfvarrhoe$, which is the case on most applications. 
% The SPO+ loss proves successful when the training set contains target $\bftheta_i$ instead of target $y_i$ \citep{elmachtoubSmartPredictThen2021}.
% i.e., $\fe(x;\calE,\bfvarrhoe) = \langle  \bfvarrhoe | \Upsilon(x) \rangle$ for some embedding $\Upsilon$, pratca
% Losses  $\bftheta$.
% .
Typical examples include the structured Hinge loss~\citep{nowozinStructuredLearningPrediction2010} or the Fenchel-Young losses~\citep{berthetLearningDifferentiablePerturbed2020}.
The SPO+ loss proves successful when the training set contains target $\bftheta_i$ instead of target $y_i$ \citep{elmachtoubSmartPredictThen2021}.

In this paper, we focus on the learning by experience setting, where the training set $(x_1,\ldots,x_n)$ contains instances but not their solutions.
% Since the post-processing $h$ is deterministic, and for notational convenience,
% $$ c(y;x) = c(h(y);x) $$

% A natural way of formulating the learning problem in that context is to use Fenchel-Young losses.
% $$ \min \frac{1}{n} \sum_{i=1}^n F(\bfvarrhoe_i) + \Omega(x_i) - \langle\bfvarrhoe_i | \Upsilon(x_i)\rangle  $$
% where 
% This approach has the advantage using only the easy problem solver. Alternative such as the structured SVM require to solve many instances a variable of the easy problem

\subsection{Learning problem and regularized learning problem}

Let $\Gammah_1,\ldots,\Gammah_n$ be our \emph{training set} composed of $n$ instances of~\eqref{eq:hardProblem}. 
Without loss of generality, we suppose that $c(y;\Gammah) \geq 0$ for all instances $\Gammah$ and feasible solution $y \in \calY(x)$. 
We also suppose to have a mapping $u : x \mapsto u(x) \geq 0$ that is a coarse estimation of the absolute value of an optimal solution of $x$. 
We define the loss function as the weighted cost of the easy problem solution as a solution of the hard problem.
\begin{equation}\label{eq:lossFunctionWithoutPostProcessing}
    \ell(\bfw,\Gammah) := \frac{1}{u(x)}\max\Big\{\fh\big(y;\Gammah\big)\colon y \in \argmin_{\tilde y \in \calY(x)} \fe\big(\tilde y,f_{\bfw}(\Gammah)\big)\Big\}.
\end{equation}
The \emph{learning problem} consists in minimizing the expected loss on the training set
\begin{equation}\label{eq:learningProblem}
    \min_{\bfw \in \bfW} \frac{1}{n} \sum_{i= 1}^n \ell(\bfw,\Gammah_i).
\end{equation}
The instances in the training set may be of different size, leading to solutions costs which different order of magnitudes. 
The weight $\frac{1}{u(x)}$ enables to avoid giving too much importance to large instances.

When the approximation is flexible and the training set is small, the solution of~\eqref{eq:learningProblem} may overfit the training set, and lead to poor performance on instances that are not in the training set.
In that case, the usual technique to avoid overfitting is to regularize the problem. 
One way to achieve this is to make the prediction ``robust'' with respect to small perturbations: 
We want the solution returned to be good even if we use $\bfw + \bfZ$ instead of $\bfw$, where $\bfZ$ is a small perturbation.
Practically, we assume that $\bfZ$ is a standard Gaussian, $\sigma>0$ is a real number, and we define the \emph{perturbed loss}
\begin{equation}\label{eq:perturbedLossFunction}
    \ell^{\mathrm{pert}}(\bfw,\Gammah) = \bbE_{\bfZ}\Big[\frac{1}{u(x)}\max\Big\{\fh\big(y;\Gammah\big)\colon y \in \argmin_{\tilde y \in \calY(x)} \fe\big(\tilde y,f_{\bfw + \sigma \bfZ}(\Gammah)\big)\Big\}\Big].
    % \bbE_{\bfZ}\ell(\bfw + \sigma\bfZ,\Gammah).
\end{equation}
This perturbation can be understood as a regularization of the easy problem \citep{berthetLearningDifferentiablePerturbed2020}.
The \emph{regularized learning problem} is then formulated as follows. 
\begin{equation}\label{eq:regularizedLearningProblem}
    \min_{\bfw \in \bfW} \frac{1}{n} \sum_{i=1}^n \ell^{\mathrm{pert}}(\bfw,\Gammah)
\end{equation}

\subsection{Algorithms to solve the learning problem}
\label{sub:learningAlgorithm}

% The following result gives insights on the kind of algorithm that should be used to solve the learning problem~\eqref{eq:learningProblem}.

\begin{prop}\label{prop:piecewiseConstant}
    If
    % \begin{itemize}
        % \item 
        $\bfw \mapsto f_{\bfw}(x)$ and $\bftheta \mapsto \fe(y,\bftheta)$ are piecewise linear for all $y$ in $\calY(x)$, then the objective of~\eqref{eq:learningProblem} is piecewise constant in $\Gammah$. 
\end{prop}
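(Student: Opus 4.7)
The objective of~\eqref{eq:learningProblem} is a finite sum, so it suffices to show that, for each fixed instance $\Gammah$, the map $\bfw \mapsto \ell(\bfw,\Gammah)$ is piecewise constant. I would fix such an instance $\Gammah$ and exploit the finiteness of the combinatorial set $\calY(\Gammah)$: the loss depends on $\bfw$ only through the argmin set $\calY^\star(\bfw) := \argmin_{\tilde y \in \calY(\Gammah)} \fe\big(\tilde y, f_{\bfw}(\Gammah)\big)$.

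The first step is to observe that, for each fixed $y \in \calY(\Gammah)$, the scalar map $\bfw \mapsto \fe\big(y,f_{\bfw}(\Gammah)\big)$ is piecewise linear in $\bfw$, as a composition of the piecewise linear map $\bfw \mapsto f_{\bfw}(\Gammah)$ with the piecewise linear map $\bftheta \mapsto \fe(y,\bftheta)$. (Routine check: on each piece of the domain subdivision of $\bfw \mapsto f_{\bfw}(\Gammah)$, pull back the polyhedral subdivision associated with $\fe(y,\cdot)$; the common refinement is a polyhedral subdivision on which the composition is affine.) Taking the common refinement over the finitely many $y \in \calY(\Gammah)$ yields a finite polyhedral subdivision $\{R_k\}_k$ of $\bfW$ such that every function $\bfw \mapsto \fe(y,f_{\bfw}(\Gammah))$ is affine on each $R_k$.

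The second step is to further refine by the hyperplanes on which pairs of these affine pieces coincide: on the interior of each resulting cell, the ordering of the values $\{\fe(y,f_{\bfw}(\Gammah))\}_{y\in\calY(\Gammah)}$ is constant in $\bfw$, so the argmin set $\calY^\star(\bfw)$ is constant. Since $\fh(\cdot;\Gammah)$ and $u(\Gammah)$ do not depend on $\bfw$, the loss
\[
\ell(\bfw,\Gammah) = \frac{1}{u(\Gammah)}\max_{y \in \calY^\star(\bfw)} \fh(y;\Gammah)
\]
is therefore constant on the interior of each cell. Taking the common refinement of these subdivisions over the finite training set $\Gammah_1,\ldots,\Gammah_n$ gives a single finite polyhedral subdivision of $\bfW$ on which $\frac{1}{n}\sum_{i=1}^n \ell(\bfw,\Gammah_i)$ is constant, which is the desired conclusion.

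The only genuinely delicate point is the first step: verifying that the composition of two piecewise linear maps is piecewise linear and that the relevant subdivision is finite. This reduces to the standard fact that the preimage of a polyhedral complex under an affine map is a polyhedral complex, and that the common refinement of finitely many polyhedral complexes in $\bfW$ is again finite. Everything else is bookkeeping on finitely many cells and finitely many elements of $\calY(\Gammah_i)$.
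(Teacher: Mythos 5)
Your proposal is correct and follows essentially the same route as the paper: both rest on the fact that the composition of piecewise linear maps is piecewise linear, yielding a finite polyhedral subdivision of $\bfW$ on whose cells the argmin set, and hence the loss, is constant. You simply make explicit the refinement step (common refinement over $y \in \calY(\Gammah)$ and over the coincidence hyperplanes of the affine pieces) that the paper's proof asserts in one line, which is a welcome but not substantively different elaboration.
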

\begin{proof}
    Since the composition of two piecewise linear functions is piecewise linear, $\bfw \mapsto \fe(x;f_{\bfw}(\Gammah))$ is piecewise linear.
    Hence, there exists a partition of the space into a finite number of polyhedra such that the set 
    $\Big\{\fh\big(x;\Gammah\big)\colon x \in \argmin_{x \in \calX(\calE)} \fe\big(x;\varphi_{\bfw}(\Gammah)\big)\Big\}$ is constant on each polyhedron.
    The definition of $\ell(\bfw,\Gammah)$ then ensures that $\bfw \mapsto \ell(\bfw,\Gammah)$ is piecewise constant on the interior of each polyhedron of the partition, and lower semi-continuous, which gives the result.
\end{proof}

Proposition~\ref{prop:piecewiseConstant} is bad news from an optimization point of view.
We need a black-box optimization algorithm that uses a moderate amount of function evaluations, does not rely on ``slope'' (due to null gradient), and takes a global approach (due to non-convexity).
% We need an optimization algorithm that does not use (approximations of) gradients, but only (a moderate amount of) evaluations of the function.
We therefore suggest using either a heuristic that searches the state space such as the DIRECT algorithm~\citep{jonesLipschitzianOptimizationLipschitz1993a}, or a Bayesian optimization algorithm that builds a global approximation of the objective function and uses it to sample the areas in the space of $\bfw$ that are promising according to the approximation.
The numerical experiments evaluate the performance of these two kinds of algorithms.

Let us now consider the regularized learning problem~\eqref{eq:regularizedLearningProblem}.
Since the convolution product of two functions is as smooth as the most smooth of the two functions, $\bfw \mapsto \ell^{\mathrm{pert}}(\bfw,\Gammah)$ is $C^{\infty}$. 
It can therefore be minimized using a stochastic gradient descent~\citep{dalleLearningCombinatorialOptimization2022}.
On our applications, and using a generalized linear model, we obtained better results by solving a sample average approximation of this perturbed learning problem using the heuristics mentioned above.
This is not so surprising because in that case, the objective of the learning problem is composed of several plateaus with smooth transition inbetween, which is not much easier to solve in practice.
Remark that stochastic gradient descent is the method of choice when using a large neural network.
% We obtained much better results by solving a sample average approximation of this perturbed learning problem using the heuristics mentioned above.

\subsection{Practical remarks for a generic implementation}
\label{sub:practicalRemarks}

\paragraph{Perturbation strength.} Section~\ref{sec:convergenceEstimator} provides a closed formula to set the perturbation strength $\sigma$.
% We observed in the numerical experiments that using no perturbation also leads to good results in practice on our problems.

\paragraph{Skipping the bilevel optimization}
Using a bilevel optimization enables to define $\ell(\bfw,\Gammah)$ unambiguously even when the easy problem~\eqref{eq:easyProblem} admits several optimal solutions.
Since the bilevel optimization is not easy to handle, we use in practice the loss
$$\tilde\ell(\bfw,\Gammah) = \frac{1}{u(x)}\fh\big(\calA\circ\varphi_{\bfw}(\Gammah)\big)$$
that takes the solution returned by the algorithm $\calA$ we use for~\eqref{eq:easyProblem}. 
% That is we
% which is a black-box and not a function since different runs of algorithm $\calA$ may return different solutions 
Its value may therefore depend on $\calA$.
% when~\eqref{eq:easyProblem} admits several optimal solutions.

\paragraph{Post-processing}
On many applications, the post-processing $h$ is time-consuming, and there exists an alternative post-processing $\tilde h$ that is much faster, even if the resulting solution $z$ may have a larger cost.
A typical example is our $1|r_j|\sum_j C_j$ application, where $\calY(x) = \calZ(x)$, and the post-processing is only a local descent. The post-processing is therefore not mandatory, and we could use $\tilde h = \text{Identity}$.
In that context, using $\tilde h$ instead of $h$ during the learning phase leads to a much faster learning algorithm, while not necessarily hurting the quality of the $\bfw$ learned. 

% As an alternative, we can use in the learning problem the solution returned by the solution pipeline after the post-processing $\psi$ instead of the solution of the easy problem. 
% The loss becomes
% \begin{equation}\label{eq:lossFunctionWithPostProcessing}
%     \ell^{\psi}(\bfw,\Gammah) := \frac{1}{u(x)}\fh\big(y_{\bfw}(x),x\big).
% \end{equation}
% where $y_{\bfw}(x)$ is the solution returned by our algorithm for~\eqref{eq:easyProblem} when called with $\bftheta = f_{\bfw}(x)$.
% % The loss $\ell^{\psi}$ is again a black-box and not a function.
% % In our numerical experiments, 
% The resulting learning algorithm may be much more time-consuming depending on $\psi$. 
% %is the most time-consuming part of the solution pipeline.
% % Given our limited learning time budget, this constrains us to use smaller learning set, which is not beneficial.

\paragraph{Sampling in the prediction pipeline.}
If we are ready to increase the execution time, the perturbation of $\bfw$ by $\bfZ$ can also be used to increase the quality of the solution returned by our solution pipeline.
We can draw several samples $\bfZ_i$ of $\bfZ$, apply the solution pipeline with $\bfw + \sigma\bfZ_i$ instead of $\bfw$, and return the best solution found across the samples at the end.
We provide numerical results with this perturbed algorithm on the $1|r_j|\sum_j C_j$ problem in Section~\ref{sec:numericExperiments}.

\section{Learning rate and approximation ratio}
\label{sec:convergenceEstimator}

This section introduces theoretical guarantees on the average optimality gap of the solution returned by the learned algorithm when $\bfw$ is chosen as in Section~\ref{sec:learningProblem}.
% For statistical
% The notion of structured approximation lets too much flexibility in the choice of $\varphi_{\bfw}$ and $\fe$ to prevent overfitting when using the non-regularized problem~\eqref{eq:learningProblem}.
Two conditions seem necessary to obtain such guarantees. 
First, it must be possible to approximate the hard problem by the easy one.
That is, there must exist a $\tilde\bfw$ such that an optimal solution of $\varphi_{\tilde\bfw}(\Gammah)$ provides a good solution of $\Gammah$.
And second, when such a $\tilde\bfw$ exists, our learning problem must be able to find it or another $\bfw'$ that leads to a good approximation.
Our proof strategy is therefore in two steps.
First, we show that the solution of our learning problem converges toward the ``best'' $\bfw$ when the number of instances in the solution set increases.
And then we show that if there exists a $\tilde\bfw$ such that the expected optimality gap of the solution returned by our solution approach is bounded, then the expected optimality gap for the learned $\bfw$ is also bounded.
For statistical reasons discussed at the end of Section~\ref{sub:approx_ratio},
we carry this analysis using the regularized learning problem~\eqref{eq:regularizedLearningProblem}. % because $\ell$ is not smooth (Proposition~\ref{prop:piecewiseConstant}).
%  dicussed the need for and the influence of the pertubation.

\subsection{Background on learning with perturbed bounded losses}
\label{sub:backgroundStatisticalLearning}

Let $\xi$ be a random variable on a space $\Xi$ and $\bfW$ a non-empty compact subset of $\bbR^d$,  $\bfW \subseteq \bbB_{\infty}(M)$ where $\bbB_{\infty}(M)$ is the $\|\cdot\|_{\infty}$ ball of radius $M$ on $\bbR$. 
Let $\ell : \Xi \times \bbR^d \rightarrow [0,1]$ be a loss function, we define the perturbed loss as 
\begin{equation}
    \label{eq:perturbedLoss}
    \ell^{\mathrm{pert}}(\bar\xi,\bfw) = \bbE\bigl[\ell(\bar\xi,\bfw + \sigma Z)\bigr] \quad \text{with} \quad \sigma > 0.
\end{equation}
We suppose that $\ell(\cdot,\bfw)$ is integrable for all $\bfw \in \bfW$.
We define the \emph{expected risk}  $L(\bfw)$ and the \emph{expected risk minimizer}  $\bfw^*$ as
\begin{equation}\label{eq:expectedRiskMinimization}
    \bfw^* \in \argmin_{\bfw \in \bfW} L(\bfw) \quad \text{with} \quad L(\bfw) = \bbE \bigl[\ell^{\mathrm{pert}}(\xi,\bfw)\bigr].
\end{equation}
Let $\xi_1,\ldots,\xi_n$ be $n$ i.i.d.~samples of $\xi$. We define the \emph{empirical risk} $\hat L_n(\bfw)$ and the empirical risk minimizer $\hat \bfw_n$ as
\begin{equation}\label{eq:empriricalRiskMinimization}
    \hat \bfw_n  \in \argmin_{\bfw \in \bfW} \hat L_n(\bfw) \quad \text{with} \quad \hat L_n(\bfw) = \frac{1}{n}\sum_{i=1}^n\ell^{\mathrm{pert}}(\xi_i,\bfw).
\end{equation}
Note that both $L_n(\bfw)$ and $\hat \bfw_n$ are random due to the sampling of the training set $\xi_1,\ldots,\xi_n$.
The following result bounds the excess risk incurred when we use $L_n(\bfw)$ instead of $L(\bfw)$.
\begin{theo}\label{theo:perturbedBoundedLossLearningRate}
    Suppose that $\bfW \subseteq \bbB_{\infty}(M)$ where $\bbB_{\infty}(M)$ is the $\|\cdot\|_{\infty}$ ball of radius $M$ on $\bbR$. Given $0< \bfdelta <1$, with probability at least $1-\delta$, we have the following bound on the excess risk.
    \begin{equation}\label{eq:upperBoundTheoPerturbedBoundedLossLearningRate}
        L(\hat \bfw_n) - L(\bfw^*) \leq C\frac{Md}{\sigma\sqrt{n}}  + \sqrt{\frac{2\log(2/\delta)}{n}}
    \end{equation}
    with $C =48\int_0^1\sqrt{-\log{x}}dx$.
\end{theo}
We believe that Theorem~\ref{theo:perturbedBoundedLossLearningRate} is in the statistical learning folklore, but since we did not find a proof, we provide one based on classical statistical learning results in Appendix~\ref{sec:ProofOfTheoremtheo:perturbedBoundedLossLearningRate}.

\paragraph{Learning rate of our structured approximation}
In order to apply Theorem~\ref{theo:perturbedBoundedLossLearningRate} to the learning problem of Section~\ref{sec:learningProblem}, we must endow the set $\alephh$ of instances with a distribution.
% We denote by $\frakE$ the set of possible structures. 
% We assume it finite.
% For each structure $\calE$, we denote by
% $$\frakRh(\calE) = \Big\{ \bfvarrhoh = (\bfrhoh_e)_{e \in E_k,k \in K} \colon \bfrhoh_e \in \calR_k,\forall e \in E_k, \forall k \in K \Big\} \subseteq \bbR^{\sum_{k \in K} \degh_k \times E_k} $$ 
% the set of possible parametrizations of $\calE$.
% Finally, 
% we recall that
% $\alephh = \big\{(\calE,\bfvarrhoh)\colon \calE \in \frakE, \bfvarrho \in \frakRh(\calE)\big\}$
% is the set of possible instances.
Recall that the loss $\ell$ and the perturbed loss $\ell^{\mathrm{pert}}$ have been defined in Equations~\eqref{eq:lossFunctionWithoutPostProcessing} and~\eqref{eq:perturbedLossFunction}.
We assume that the instance $\Gammah$ is a random variable with probability distribution $\mu$ on $\alephh$, and that both $\ell(\cdot,\bfw)$ and $\ell^{\mathrm{pert}}(\cdot,\bfw)$ are integrable for all $\bfw$.
With these definitions, using $\alephh$ as $\Xi$, instances $\Gammah$ as random variables $\xi$, and if we suppose that the training set $\Gammah_1,\ldots,\Gammah_n$ is composed of $n$ i.i.d.~samples of $\Gammah$, we have recast our regularized learning problem~\eqref{eq:regularizedLearningProblem} as a special case of~\eqref{eq:empriricalRiskMinimization}.
We can therefore apply Theorem~\ref{theo:perturbedBoundedLossLearningRate} and deduce that the upper bound~\eqref{eq:upperBoundTheoPerturbedBoundedLossLearningRate} on the excess risk applies.

% \inlAP{ Reformulate this paragraph.}
This result underlines a strength of the architectures of Section~\ref{sec:setting}. 
Because they enable to use approximations parametrized by $\bfw$ whose dimension is small and does not depend on $x$, the bound on the excess risk only depends on the dimension of $\bfw$ and not on the size of the instances used.
Hence, \emph{a pipeline with these architectures enables to make predictions that generalize (in expectation) on a test set whose instances structures $\calI(x)$ are not necessarily present in the training set}.
This is confirmed experimentally in Section~\ref{sec:numericExperiments}, where the structures of the instances in the test set of the stochastic VSP (the graph $D$) do not appear in the training set.

\begin{rem}
    Theorem~\ref{theo:perturbedBoundedLossLearningRate} does not take into account the fact that, practically and in our numerical experiments, we use a sample average approximation on $\bfZ$ of the perturbed loss instead of the true perturbed loss. 
\end{rem}

\subsection{Approximation ratio of our structured approximation}
\label{sub:approx_ratio}
Let $d(x) = |I(x)|$ and $c^* (x) = \argmin_{y \in \calY(x)}c(y,x)$ be the cost of an optimal solution of~\eqref{eq:hardProblem}.
\begin{theo}\label{theo:approximationRatio}
    Suppose that for all $x$ in $\calX$ (outside a negligible set for the measure on $ \calX$), % and $i \in \calI(x)$, 
    \begin{enumerate}
        \item $f_{\bfw}(x) = \big(\langle \bfw | \bfphi(i,x)\rangle\big)_{i \in \calI(x)} $ 
        % and $ x \mapsto f_{\bfw}(e,\Gammah)$ 
        and $\|\bfphi(i,x)\|_2 \leq \kappa_{\phi}$,
        \item and there exists $\tilde \bfw$, $a>0$, $b>0$, and $\beta \in \{1,2\}$ such that, for any $\bfp \in \bbR^{d(x)}$,
        $$ c(y, x) - c^*(x)\leq a u(x) + b \|\bfp\|_\beta \quad \text{for any }y \in \argmin_{\tilde y \in \calY(x)}g\big(\tilde y,f_{\tilde \bfw}(x) + \bfp\big) $$
    \end{enumerate}
    Then, under the hypotheses of~Theorem~\ref{theo:perturbedBoundedLossLearningRate}, with probability at least $1-\delta$ (on the sampling of the training set)
    $$ \underbrace{L(\hat\bfw_n) - \bbE\biggl[\frac{c^*(x) }{u(x)}  \biggr]}_{\text{Perturbed prediction optimality gap}} 
    \leq 
    \underbrace{C\frac{Md}{\sigma\sqrt{n}} + \sqrt{\frac{2\log(2/\delta)}{n}}}_{\text{Training set error}} 
    + \underbrace{a}_{\substack{Appro-\\ximation\\error}} 
    + \underbrace{b\sigma\kappa_{\phi}\sqrt{d}\bbE\Biggl[\frac{[d(x)]^{1/\beta}}{u(x)}\Biggr]}_{\text{Perturbation error}}$$
\end{theo}
Before proving the theorem, let us make some comments.
First, we explain why the hypotheses are meaningful.
The first hypothesis only assumes that the model is linear, and that, with probability $1$ on the choice of $x$ in $\calX$ the features are bounded. 
Such a hypothesis is reasonable as soon as we restrict ourselves to instances whose parameters are bounded.
Let us recall that $u(x)$ is a coarse upper bound on $c^*(x)$.
When $u(x) = c^*(x)$ and $\bfp = 0$, the second hypothesis only means that our non-perturbed pipeline with parameter $\tilde \bfw$ is an approximation algorithm with ratio $1+a$.
With a $\bfp \neq 0$, the second hypothesis is stronger: It also ensures that this approximation algorithm guarantee does not deteriorate too fast.
Later in this section, we prove that this hypothesis is satisfied for our running example.

\paragraph{Approximation ratio guarantee}
Using $u(x) = c^*(x)$ makes clear the fact that \emph{Theorem~\ref{theo:approximationRatio} provides an approximation ratio guarantee in expectation.}
Furthermore, 
% \paragraph{Choice of the perturbation strength.}
it gives \emph{a natural way of setting the strength }$\sigma$ of the perturbation: The bound is minimized when we use
\begin{equation}\label{eq:optimalSigmaN}
    \sigma_n = \sqrt{
    \frac
        {CM\sqrt{d}}
        {\sqrt{n}b\kappa_{\phi}\bbE\Bigl[\frac{[d(x)]^{1/\beta}}{u(x)}\Bigr]}
}.
\end{equation}
% \paragraph{Large training set regime.}
Using this optimal perturbation and $\delta = \frac{1}{n}$, the upper bound on $L(\hat\bfw_n) - \bbE\bigl[\frac{c^*(x) }{u(x)}  \bigr]$ is in $$a + O\Big(n^{-1/4}\big(1 + \log(n)\big)\Big)\xrightarrow[n\to +\infty]{}  a.$$ 
% \begin{coro}
%     Under the hypotheses of Theorem~\ref{theo:approximationRatio}, and using the $\sigma_n$ of Equation~\eqref{eq:optimalSigmaN}, 
%      $$ L(\hat\bfw_n) - \bbE\biggl[\frac{c^*(x) }{u(x)}  \biggr] \xrightarrow[n\to +\infty]{}  a. $$
% \end{coro}
In other words, \emph{in the large training set regime the learned $\hat \bfw_n$ recovers the approximation ratio guarantee $a$ of $\tilde \bfw$}.

\paragraph{Large instances}
Theorem~\ref{theo:approximationRatio} always provides guarantees when $d(x)$ is bounded on $\calX$. 
However, it may fail to give guarantees when $d(x)$ is unbounded on $\calX$ since $\bbE\bigl[\frac{[d(x)]^{1/\beta}}{u(x)}\bigr]$  may not be finite.
% We now explain on which kind of combinatorial optimization problems $\bbE\bigl[\frac{[d(x)]^{1/\beta}}{u(x)}\bigr]$ is likely to remain small even when $\calX$ contains large instances with $d(x) \rightarrow \infty$.
Since $u(x)$ is a coarse upper bound on $c^*(x)$, the term
$\bbE\bigl[\frac{[d(x)]^{1/\beta}}{u(x)}\bigr]$ remains finite when $d(x)$ is unbounded only if the cost of an optimal solution $c^*(x)$ grows at least as fast as the number of parameters of the instance $d(x)$ to the power ${1/\beta}$. 
From that point of view, the single machine scheduling problem $1|r_j|\sum_{C_j}$ is ideal. Indeed, in that case $d(x)$ is the number of job, and $c^*(x)\sim[d(x)]^2$, hence $\bigl[\frac{[d(x)]^{1/\beta}}{u(x)}\bigr] $ becomes smaller and smaller when the size of $x$ increases.
On the two stage spanning tree problem, the situation is slightly less favorable. Indeed, $d(x)$ is equal to twice the number of edges. Since an optimal spanning tree contains $|V|-1$ edges, we expect $c^*(x)$ to be of the order of magnitude of $d(x)$ on sparse graphs (graphs such that $|E| \sim |V|$, like grids for instance), and $\sqrt{d(x)}$ on dense graphs (graph such that $|E| \sim |V|^2$, like complete graphs). 
Later in this section, we prove that the second hypothesis is satisfied for the two stage spanning tree problem with $\beta = 1$.
Hence, Theorem~\ref{theo:approximationRatio} gives an approximation ratio guarantee for sparse graphs.
The situation is roughly the same for the stochastic vehicle scheduling problem.
A typical example where $\bbE\frac{[d(x)]^{1/\beta}}{u(x)}$ may not be finite is the shortest path problem on a dense graph. On many applications such as finding an optimal journey on a public transport system, the number of arcs tends to remain bounded, say $\leq 10$, while the number of arcs in the graph $d(x)$ grows with the size of the instance. 

\paragraph{Optimal resolution of the learning problem}
Theorem~\ref{theo:approximationRatio} applies for the optimal solution $\hat\bfw_n$ of the learning problem.
We let it to future work to design an exact algorithm that guarantees that the $\bfw$ returned is within an optimality gap $\gamma$ with the optimal solution.
We would then obtain a variant of Theorem~\ref{theo:approximationRatio} proving the approximation ratio result for the $\bfw$ returned, with an additional term in $\gamma$ in the upper bound taking into account the optimality gap.

% Note that the hypothesis that $\varphi_{\bfw}(e,\Gammah)$ is Lipschitz is satisfied when $\overline{\varphi_{\bfw}(e,\Gammah)}$ is the dot product $\langle \bfw | \bfphi(e,\Gammah) \rangle$ and the feature map is bounded $|\bfphi(e,\Gammah)| \leq \kappa^{\varphi}$. 
% Let us recall that $u(\calE)$ is a coarse approximation of the cost of an instance with structure $\calE$. 
% On many operations research problems, we there expect to scale linearly with $|\calE|$.
% In this case, the perturbation error is therefore in $O\Big(\bbE[1/\sqrt{|\calE|}]\big)$ and its impact on the gap decreases with the size of the instance.
% Remark that $\sqrt{|\calE|}$ and $u(\calE)$ both measure the size of the instance. 

\paragraph{Influence of the perturbation}
% \label{sub:perturbationInfluence}
% We mention at the beginning of the section that we need the perturbation to make the loss smoother.
Since we have made very few assumptions on $\fh$, $\fe$, and $f_{\bfw}$, we do not have control on the size of the family of functions $\{\ell_{\bfw}\colon\bfw\}$,
This family may be very large, and therefore able to fit any noise, which would lead to slower learning rate.
Without additional assumptions, we therefore need to regularize the family.
In particular, we need to smooth the piecewise constant loss (Proposition~\ref{prop:piecewiseConstant}).
As we have seen in this section, perturbing $\bfw$ does the job, but comes at a double cost in Theorem~\ref{theo:approximationRatio}: a perturbation error, and larger than hoped training set error in $O(d/\sqrt{n})$.
The term in $O(d/\sqrt{n})$ is slightly disappointing because the proof techniques used in statistical learning theory typically lead to bounds in $O(\sqrt{d}/\sqrt{n})$.
This is for instance the case for the metric entropy method used to prove Theorem~\ref{theo:perturbedBoundedLossLearningRate} when the gradient of the loss is Lipschitz in $\bfw$.
The Gaussian perturbation restores the Lipschitz property for the perturbed loss, but it comes at the price of an additional $\sqrt{d}$ in the bound derived by the metric entropy method, as can be seen in the proof of Lemma~\ref{lem:boundedPerturbedLipchitz} in Appendix~\ref{sec:ProofOfTheoremtheo:perturbedBoundedLossLearningRate}.
Designing a learning approach that avoids the additional $\sqrt{d}$ term is an interesting open question.
An alternative would be to make more assumptions on $\fh$, $\fe$, and $\varphi$ with the objective of making the perturbation optional in the proof.

\paragraph{Proof of Theorem~\ref{theo:approximationRatio}}
    We have
    $$ L(\hat\bfw_n) - \bbE\biggl[\frac{c^*(x) }{u(x)}  \biggr] =  \underbrace{L(\hat\bfw_n)- L(\bfw^*) }_{\leq C\frac{Md}{\sigma\sqrt{n}} + \sqrt{\frac{2\log(2/\delta)}{n}}} + \underbrace{L(\bfw^*) - L( \tilde \bfw)}_{\leq 0} + L(\tilde\bfw) - \bbE\big[\frac{c^*(x) }{u(x)}  \big] $$
    %+ \underbrace{L(\bfw^*) - \bbE\biggl[\frac{c^*(x) }{u(x)}  \biggr]}_{\leq L(\tilde\bfw) - \bbE\big[\frac{c^*(x) }{u(x)}  \big]}   $$
    We therefore need to upper bound $L(\tilde\bfw) - \bbE\bigl[\frac{c^*(x) }{u(x)}  \bigr]$ by $ a+b\sigma\kappa_{\phi}\sqrt{d}\bbE\bigl[\frac{[d(x)]^{1/\beta}}{u(x)}\bigr]$. We have % = \bbE\bigl[\frac{ - c^*(x) }{u(x)}  \bigr]$.
    \begin{align*}
       \big\| f_{\tilde \bfw + \sigma Z}(x) - f_{\tilde \bfw}(x) \big\|_{\beta} 
       &= \Big\| \sigma \big(\langle \bfZ | \bfphi(i,x)\big)_{i \in \calI(x)}  \Big\|_{\beta} \\
       &\leq  \Big\| \sigma \big(\|Z\|_2\|\phi(i,x)\|_2\big)_{i \in \calI(x)}  \Big\|_{\beta} \\
       &\leq \Big\| \sigma \big(\|Z\|_2\kappa_{\phi}\big)_{i \in \calI(x)}  \Big\|_{\beta}  = \sigma \kappa_{\phi}\|Z\|_2[d(x)]^{1/\beta} 
    \end{align*}
    Let $y$ be in $\argmin_{y \in \calY(x)}g\big(y,f_{\tilde \bfw + \sigma \bfZ}(x)\big)$. The second hypothesis of the theorem and the previous inequality give
    \begin{align*}
        c(y, x) - c^*(x) \leq a u(x) + b \kappa_{\phi}\|Z\|_2[d(x)]^{1/\beta}.
    \end{align*}
    Since $\bfZ$ is a standard Gaussian, we get $\bbE \|\bfZ\| \leq \sqrt{d}$ (Equation~\eqref{eq:upperBoundExpectationNormGaussian} in Appendix~\ref{sec:ProofOfTheoremtheo:perturbedBoundedLossLearningRate}), and the result follows by dividing the previous equality by $u(x)$ and taking the expectation. \hfill \qed

\subsection{Existence of a $\tilde\bfw$ with an approximation ratio guarantee}

In this section, we prove that the hypotheses of Theorem~\ref{theo:approximationRatio} are satisfied for the maximum weight two stage spanning tree problem. 
We then give a criterion which ensures that these hypotheses are satisfied.

\paragraph{Maximum weight two stage spanning tree}
In this section, we restrict ourselves to maximum weight spanning tree instances, that is, instances of the minimum weight spanning tree with $c_{e} \leq 0$ and $d_{es} \leq 0$ for all $e$ in $E$ and $s$ in $S$.
Given an instance, let $I(x) = \big((e,\texttt{stage})\colon e \in E, \, \texttt{stage} \in \{1,2\} \big)$, leading to $\bar c_e = \big\langle \bfw | \bfphi\big((e,1),x\big)\big\rangle$ and $\bar d_e = \big\langle \bfw | \bfphi\big((e,2),x\big)\big\rangle$.
We define a feature
$$ \phi(e,1) = c_e \quad \text{and} \quad \phi(e,2) = \frac{1}{|S|}\sum_{s \in S}d_{es},$$
and define $\tilde \bfw$ to be equal to $1$ for this feature and $0$ otherwise.
The following proposition shows that the second hypothesis of Theorem~\ref{theo:approximationRatio} is then satisfied with $a =1/2$ and $b=1$.

\begin{prop}\label{prop:twoStageSpanningTree_ratio}
    For any instance $x$ of the maximum weight spanning tree problem, we have
    $$ c(y, x) - c^*(x)\leq \frac{1}{2} |c^*(x)| + \|\bfp\|_1 \quad \text{for any }y \in \argmin_{y' \in \calY(x)}g\big(y,f_{\tilde \bfw}(x) + \bfp\big). $$
\end{prop}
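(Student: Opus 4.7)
The plan is to chain three ingredients: the decoder's ``$E_1 = \emptyset$'' fallback, a $1/2$-approximation of the expected-value spanning tree in the spirit of \citet{escoffierTwostageStochasticMatching2010}, and a careful accounting of how the perturbation $\bfp$ propagates through Kruskal's algorithm. Fix an instance $x$ and write $y = (\bar E_1, \bar E_2)$ for an optimizer of the perturbed CO layer~\eqref{eq:twoStageSpanningTreeProblem_easy} with weights $c_e + p_{e,1}$ and $\bar d_e + p_{e,2}$, where $\bar d_e = \tfrac{1}{|S|}\sum_s d_{es}$. Introduce $T_1^\star$ (an MST of $(V,E)$ with weights $c_e$), $T_s^\star$ (an MST with weights $d_{es}$), and the shorthands $\gamma_1 := \sum_{e \in T_1^\star} c_e$ and $\gamma_2' := \tfrac{1}{|S|}\sum_s \sum_{e \in T_s^\star} d_{es}$.

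I would then establish two upper bounds on $c(y,x)$. First, the $E_1 = \emptyset$ branch of the decoder gives $c(y,x) \leq \gamma_2'$. Second, using the branch with first stage $\bar E_1$ and noting that taking $E_s = \bar E_2$ for every $s$ is a feasible (hence upper-bounding) second-stage choice yields $c(y,x) \leq \sum_{e \in \bar E_1} c_e + \sum_{e \in \bar E_2} \bar d_e$. Comparing the perturbed CO-layer value at $y$ with that of the feasible pair $(T_1^\star, \emptyset)$ and rearranging,
\[ \sum_{e \in \bar E_1} c_e + \sum_{e \in \bar E_2} \bar d_e \;\leq\; \gamma_1 \;+\; \sum_{e \in T_1^\star \setminus \bar E_1} p_{e,1} \;-\; \sum_{e \in \bar E_1 \setminus T_1^\star} p_{e,1} \;-\; \sum_{e \in \bar E_2} p_{e,2}. \]
The three perturbation sums are indexed by disjoint subsets of $\calI(x)$ (the first two by first-stage coordinates, the third by second-stage ones), so they are jointly bounded by $\|\bfp\|_1$. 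The elementary inequality $\min(a+t,b) \leq \min(a,b) + t$ for $t \geq 0$ then gives $c(y,x) \leq \min(\gamma_1, \gamma_2') + \|\bfp\|_1$.

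It remains to show $\min(\gamma_1, \gamma_2') \leq \tfrac{1}{2} c^*(x)$. Let $(E_1^*, (E_s^*)_s)$ attain $c^*(x)$. Because all weights are non-positive, extending any forest to a spanning tree can only decrease its total weight, so $\gamma_1 \leq \sum_{e \in E_1^*} c_e$ (take any spanning tree containing $E_1^*$) and analogously $\sum_{e \in T_s^\star} d_{es} \leq \sum_{e \in E_s^*} d_{es}$ for each $s$. Averaging and summing yields $\gamma_1 + \gamma_2' \leq c^*(x)$, so $\min(\gamma_1, \gamma_2') \leq \tfrac{1}{2}(\gamma_1 + \gamma_2') \leq \tfrac{1}{2} c^*(x)$; substituting into the previous paragraph gives $c(y,x) - c^*(x) \leq -\tfrac{1}{2}c^*(x) + \|\bfp\|_1 = \tfrac{1}{2}|c^*(x)| + \|\bfp\|_1$ as claimed. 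The main obstacle is avoiding a factor $2$ on $\|\bfp\|_1$: a triangle-inequality treatment of each perturbation sum separately would produce $2\|\bfp\|_1$, and the key observation is precisely that the first- and second-stage blocks of $\bfp$ live in disjoint coordinates of $\|\bfp\|_1$ and can therefore be charged jointly.
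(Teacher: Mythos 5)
Your proof is correct, but it takes a genuinely different route from the paper's. The paper compares the decoded solution against the solutions $(E_1^*,E_s^*)$ extracted from an optimal two-stage solution, handles the perturbation through a generic lemma (``if two objectives differ everywhere by at most $\gamma$, then minimizing the wrong one costs at most $2\gamma$''), and then combines the two candidate bounds via the weighted average $\min(a,b)\le\frac{|S|a+(|S|-1)b}{2|S|-1}$, which is what produces the refined constant $\frac{|S|-1}{2|S|-1}$ mentioned around the proposition. You instead compare $y$ against the single feasible CO-layer point $(T_1^\star,\emptyset)$, use the classic ``best of the two extreme policies'' bound $\min(\gamma_1,\gamma_2')\le\tfrac12(\gamma_1+\gamma_2')\le\tfrac12 c^*(x)$, and---this is the nicest part---bound the perturbation contribution by $\|\bfp\|_1$ directly, exploiting that $y$ and $(T_1^\star,\emptyset)$ are $\{0,1\}$ vectors whose difference has entries in $\{-1,0,1\}$ supported on distinct coordinates (the two first-stage index sets are disjoint after cancellation, and the second-stage block is disjoint from both), so that the three sums are jointly charged to $\|\bfp\|_1$ once. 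This avoids the factor $2$ of the paper's lemma altogether; in the paper that factor is meant to be reabsorbed by the $\frac{|S|}{2|S|-1}$ weight, which actually leaves $\frac{2|S|}{2|S|-1}\|\bfp\|_1>\|\bfp\|_1$ in the final display, so your treatment of the perturbation is not only different but tighter and arguably repairs a small looseness there. The price is that your averaging argument yields only the constant $\tfrac12$ and not the sharper $\frac{|S|-1}{2|S|-1}$; since the proposition as stated only claims $\tfrac12$, this is immaterial here. All the individual steps check out: the two decoder branches give exactly the bounds $c(y,x)\le\gamma_2'$ and $c(y,x)\le\sum_{e\in\bar E_1}c_e+\sum_{e\in\bar E_2}\bar d_e$, the forest-extension argument with non-positive weights gives $\gamma_1+\gamma_2'\le c^*(x)$, and $\min(a+t,b)\le\min(a,b)+t$ closes the gap.
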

Our proof shows that the results stands for $a= \frac{|S|-1}{2|S|-1}$ when $|S|$ is upper-bounded by $M$. 
Combined with Theorem~\ref{theo:approximationRatio}, Proposition~\ref{prop:twoStageSpanningTree_ratio} ensures that, when the training set is large, the learned $\hat w_n$ has the approximation ratio guarantee proved by \citet{escoffierTwostageStochasticMatching2010} for the two stage maximum weight spanning tree.
The proof of Proposition~\ref{prop:twoStageSpanningTree_ratio} is an extension of the proof of \citet[Theorem 6]{escoffierTwostageStochasticMatching2010} to deal with non-zero perturbations $\bfp$. 

\begin{proof}[Proof of Proposition~\ref{prop:twoStageSpanningTree_ratio}]
    The proof will use the following well known result
    \begin{lem}\label{eq:optimumOfTwoFunctionsWithBoundedDifference}
        Let $x\mapsto f_1(y)$ and $y\mapsto f_2(y)$ be functions from compact set $K$ to $\bbR$, and let $y_1^*$ and $y_2^*$ be respectively minima of $f_1$ and $f_2$. If we have $|f_1(y)-f_2(y)|\leq \gamma$ for all $y$, then $f_1(y_2^*)-f_1(y_1^*)\leq 2\gamma$. 
    \end{lem}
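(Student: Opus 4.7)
The plan is to chain three inequalities along the sequence $f_1(y_2^*) \to f_2(y_2^*) \to f_2(y_1^*) \to f_1(y_1^*)$, using the pointwise closeness of $f_1$ and $f_2$ on the first and third transitions, and the optimality of $y_2^*$ for $f_2$ on the middle transition. Each closeness step costs at most $\gamma$, and the optimality step is free (in the right direction), so the telescoping cost is $2\gamma$.

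Concretely, first I would apply the hypothesis $|f_1(y) - f_2(y)| \leq \gamma$ at the point $y = y_2^*$ to get $f_1(y_2^*) \leq f_2(y_2^*) + \gamma$. Next, since $y_2^*$ minimizes $f_2$ over $K$ and $y_1^* \in K$, we have $f_2(y_2^*) \leq f_2(y_1^*)$. Finally, applying the hypothesis again at $y = y_1^*$ yields $f_2(y_1^*) \leq f_1(y_1^*) + \gamma$. Combining these three inequalities gives $f_1(y_2^*) \leq f_1(y_1^*) + 2\gamma$, which rearranges to the claim.

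There is essentially no main obstacle: the argument is the standard ``two-$\gamma$'' robustness bound for argmins of uniformly close functions. Compactness of $K$ is only invoked implicitly, to guarantee that the minimizers $y_1^*$ and $y_2^*$ exist (though the conclusion in fact only needs that the minima are attained, which is granted by the hypothesis that $y_1^*$ and $y_2^*$ are minima). No differentiability, convexity, or continuity beyond the pointwise bound on $|f_1 - f_2|$ is required, and the constant $2$ is easily seen to be sharp by taking $f_2 = f_1 + \gamma \cdot \mathbf{1}_{\{y \ne y_1^*\}} - \gamma \cdot \mathbf{1}_{\{y = y_1^*\}}$ type examples.
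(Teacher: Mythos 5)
Your proof is correct: the three-step chain $f_1(y_2^*) \leq f_2(y_2^*) + \gamma \leq f_2(y_1^*) + \gamma \leq f_1(y_1^*) + 2\gamma$ is exactly the standard argument. The paper states this lemma as a ``well known result'' without proof, and yours is the canonical derivation it implicitly relies on, so there is nothing to compare or correct.
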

    We fix an instance $x$. Let us first introduce some solutions of interest.
    Given a $\bftheta$, let us denote by $\bar y(\bftheta) = (\bar E_1(\bftheta),\bar E_2(\bftheta))$ the result of the prediction problem~\eqref{eq:twoStageSpanningTreeProblem_easy}. Let $\bar z(\bftheta) = (\bar E_1(\bftheta),(\bar E_s(\bftheta)))$ with $\bar E_s(\theta) = \bar E_2(\bftheta)$.
    Let $\hat z(\bftheta) = (\bar E_1(\bftheta),(\hat E_s(\bftheta)))$ where $E_s(\bftheta)$ is the optimal second stage decision for scenario $s$ when the first stage decision is $E_1(\bftheta)$
    \begin{equation}
        \label{eq:optimalSecondStage}
        \hat E_s(\bftheta) \in \argmin \Big\{\sum_{e \in E_s} d_{es} \colon E_s \subseteq E,\,  E_s \cap \bar E_1(\bftheta) = \emptyset, \, (V,E_s \cup \bar E_1(\bftheta)) \in \calT\Big\}.
    \end{equation}
    We denote by $z^{\emptyset} = (\emptyset, (E_s^{\emptyset}))$ the solution with no first stage: $(V,E_s^{\emptyset})$ is a minimum weight spanning tree for second stage weights $(d_{es})_s$ of scenario $s$.
    And finally, we denote by $z(\bftheta) = (E_1,(E_s)_{s \in S})$ the solution returned by our pipeline, which is the solution of minimum cost among $\hat z(\bftheta)$ and $z^{\emptyset}$.
    % $z(\bftheta) = (E_1,(E_s)_{s \in S})$ the solution produced by our pipeline.

    Let $z^* = (E_1^*,(E_{s}^*))$ be an optimal solution of~\eqref{eq:twoStageSpanningTreeProblem}, and $\bar y^{s,*} = (E_1^*,E_{s}^*)$ be the solution of~\eqref{eq:twoStageSpanningTreeProblem_easy} obtained by taking $E_1^*$ as first stage solution and $E_{s}^*$ as second stage solution.

    % Let us also denote by $\bar z(\bftheta) = (\bar E_1(\bftheta),(\bar E_s(\bftheta)))$
    % We denote by $\hat z(\bftheta) = (\hat E_1,(\hat E_s))$ the solution obtained from $\hat y(\bftheta)$ using our decoding pipeline.
    % Let $\bar y(\theta)$ be the solution directly produced by the easy problem: An edge $e$ is in the first stage solution if it is in the first stage solution of the easy problem, and it is in the second stage for all scenario if and only if it is in the (single scenario) second stage of the easy problem.
    % Let $\bar y^{s,*}$ be the solution define by $\bar \bfy_1^{s,*} = \bfy_1^{*}$, and $\bar \bfy_{2,s'}^{s,*} = \bfy_{2,s}^{*}$.
    % Finally, let $y^0$ be the solution with no first stage edges, and optimal second stage edges.

  First, consider solution $\bar z = (\bar E_s,(\bar E_s)_s)$ of~\eqref{eq:twoStageSpanningTreeProblem} such that the second stage solution $\bar E_s$ is identical and equal to  $\bar E_{2}$ for all scenarios $s$ in $S$, and denote by $\bar y = (\bar E_1, \bar E_2)$ the solution of~\eqref{eq:twoStageSpanningTreeProblem_easy} obtained by taking $\bar E_1$ as first stage solution and $\bar E_2$ as second stage solution. It follows from the definition of $\tilde \bfw$ that
    \begin{equation}\label{eq:equality_fh_fe_for_identical_second_stage}
        C(\bar z) = \fe(\bar y, \tilde \bftheta) \quad  \text{where} \quad \tilde \bftheta = f_{\tilde \bfw}(x).
    \end{equation}
    Second, remark that, for any $\bftheta$, $y = (E_1,E_2)$, and $\bfp$ in $\bbR^d(x)$, we have 
    \begin{equation}\label{eq:boundOnPerturbedEasyObjective}
         |\fe(y,\theta) - \fe(y,\theta + Z)| = |\langle Z | y \rangle| = \sum_{e \in E_1}p_{e1}  + \sum_{e \in E_2} p_{e2} \leq \|\bfp\|_1 
    \end{equation}
    where the last inequality comes from the fact that $y$ is the indicator vector of a tree.

    Let $s$ be scenario in $S$. We have 
    \begin{align*}
        C(\hat z(\tilde\bftheta + Z)) &\leq C(\bar z(\tilde\bftheta + Z)) && \text{Optimal second stage} \\
        &= \fe(\bar y(\tilde\bftheta + Z), \tilde\bftheta) && \text{Equation~\eqref{eq:equality_fh_fe_for_identical_second_stage}} \\
        &\leq \fe(\bar y(\tilde\bftheta), \tilde\bftheta)+ 2 \|\bfp\|_1 && \text{Equation~\eqref{eq:boundOnPerturbedEasyObjective} + Lemma~\ref{eq:optimumOfTwoFunctionsWithBoundedDifference}}\\
        &\leq \fe(\bar y^{s,*}, \tilde\bftheta)+ 2 \|\bfp\|_1  && \text{Optimality of $\bar y(\tilde\bftheta)$} \\
        &= \sum_{e \in \bar E_1^*}c_e + \frac{1}{|S|}\sum_{e \in E_s^*}\sum_{s' \in S}d_{es'} + 2 \|\bfp\|_1 \\
        & \leq  \sum_{e \in \bar E_1^*}c_e + \frac{1}{|S|}\sum_{e \in E_s^*}d_{es} + 2 \|\bfp\|_1 && d_{es}' \leq 0 \text{ for all }e,s'.
    \end{align*}
    Furthermore, since $(V,E_s^{\emptyset})$ is a minimum spanning tree with $(d_{es})_e$ edge weights,
    $$
    C(z^{\emptyset}) 
    = \frac{1}{|S|}\sum_{s \in S}\sum_{e \in E_s^{\emptyset}}d_{es} 
    \leq \frac{1}{|S|}\sum_{s \in S}\Big[\sum_{e \in E_1^*}\underbrace{d_{es}}_{\leq 0} + \sum_{e \in E_s^*}d_{es}\Big] 
    \leq  \frac{1}{|S|}\sum_{s \in S}\sum_{e \in E_s^*}d_{es}.$$
    Summing the two previous inequalities, we get
    \begin{align*}
        C(z(\bftheta)) 
        &= \min(C(\hat z(\tilde\bftheta + Z)), C(z^{\emptyset})) \\
        &\leq \frac{|S|C(\hat z(\tilde\bftheta + Z)) + (|S|-1) C(z^{\emptyset})}{2|S|-1} \\
        &\leq \frac{|S|\big(\sum_{e \in E_1^*} c_e+ 2 \|\bfp\|_1\big)  + \sum_s\sum_{e \in E_s^*}d_{es}}{2|S|-1} \\
        &= \frac{|S|  }{2|S|-1} (C(z^*) + 2 \|\bfp\|_1) \leq \frac{1}{2}C(z^*) + \|\bfp\|_1
    \end{align*}
    Since $C(z^*) \leq 0$, we get $C(z(\bftheta)) - C(z^*) \leq \frac{1}{2}|C(z^*)| + \|\bfp\|_1$, which is the result searched.
\end{proof}
\noindent Remark that we have proved the stronger bound $c(y, x) - c^*(x)\leq \frac{|M|-1}{2|M|-1} |c^*(x)| + \|\bfp\|_1 $ when $|S|$ is upper bounded by $M$ on $\calX$.

\paragraph{Objective function approximation}
Let us finally remark that the hypotheses of Theorem~\ref{theo:approximationRatio} are satisfied when $\fe(y,f_{\bfw}(\Gammah)$ is a good approximation of $\bftheta \mapsto g(y,\bftheta)$ for some $\bfw$.
\begin{lem}
    Suppose that  $\bftheta \mapsto g(y,\bftheta)$ is $\kappa_g$ Lipschitz in $\|\cdot\|_{\beta}$, and there exists $\tilde \bfw \in \bfW$
    and $\alpha>0$ is such that, for all $x \in \calX$ and $y \in \calY(x)$, we have
\begin{equation}
    \label{eq:expectedApproximationRatio}
    \frac{|\fh(y,\Gammah) - \fe(y,f_{\tilde \bfw}(\Gammah))|}{u(\Gammah)} \leq \alpha.
\end{equation}
    Then the second hypothesis of Theorem~\ref{theo:approximationRatio} is satisfied with $a= 2\alpha$ and $b = 2\kappa_g$.
\end{lem}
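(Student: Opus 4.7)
The plan is a two-line argument built on the Lipschitz approximation hypothesis and on the elementary Lemma~\ref{eq:optimumOfTwoFunctionsWithBoundedDifference} already recalled in the proof of Proposition~\ref{prop:twoStageSpanningTree_ratio}. Fix an instance $x \in \calX$, let $y^* \in \argmin_{y' \in \calY(x)} c(y',x)$, and let $\bftheta = f_{\tilde\bfw}(x) + \bfp$ for an arbitrary perturbation $\bfp \in \bbR^{d(x)}$. Let $y$ be any element of $\argmin_{\tilde y \in \calY(x)} g(\tilde y, \bftheta)$. The target is to upper bound $c(y,x) - c^*(x)$ by $2\alpha u(x) + 2\kappa_g \|\bfp\|_\beta$.

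First I would control $|c(\bar y, x) - g(\bar y, \bftheta)|$ uniformly in $\bar y \in \calY(x)$ by inserting the intermediate term $g(\bar y, f_{\tilde\bfw}(x))$ and using the triangle inequality. The first gap is bounded by $\alpha u(x)$ thanks to the approximation hypothesis~\eqref{eq:expectedApproximationRatio}, and the second gap is bounded by $\kappa_g \|\bfp\|_\beta$ by the Lipschitz hypothesis on $\bftheta \mapsto g(\bar y,\bftheta)$. Hence, for every $\bar y \in \calY(x)$,
\begin{equation*}
    |c(\bar y, x) - g(\bar y, \bftheta)| \leq \alpha u(x) + \kappa_g \|\bfp\|_\beta =: \gamma.
\end{equation*}

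Second, I would apply Lemma~\ref{eq:optimumOfTwoFunctionsWithBoundedDifference} with $f_1(\bar y) = c(\bar y, x)$, $f_2(\bar y) = g(\bar y, \bftheta)$, $y_1^* = y^*$, and $y_2^* = y$. This yields $c(y, x) - c(y^*, x) \leq 2\gamma = 2\alpha u(x) + 2\kappa_g \|\bfp\|_\beta$, which is precisely the second hypothesis of Theorem~\ref{theo:approximationRatio} with $a = 2\alpha$ and $b = 2\kappa_g$.

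There is no real obstacle here: the argument is entirely mechanical once the Lipschitz and uniform-approximation assumptions are combined. The only mild subtlety is the choice of the pivot $g(\bar y, f_{\tilde\bfw}(x))$ in the triangle inequality, which ensures that the $\|\bfp\|_\beta$ term appears with the right norm $\beta$; this is automatic given that the Lipschitz hypothesis is stated with respect to $\|\cdot\|_\beta$ and that Theorem~\ref{theo:approximationRatio} allows $\beta \in \{1,2\}$.
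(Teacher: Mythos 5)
Your proposal is correct and follows essentially the same route as the paper: the same triangle inequality through the pivot $g\big(\bar y, f_{\tilde \bfw}(x)\big)$, bounding the two gaps by $\alpha u(x)$ and $\kappa_g\|\bfp\|_{\beta}$ respectively, followed by the same application of Lemma~\ref{eq:optimumOfTwoFunctionsWithBoundedDifference} to the pair $c(\cdot,x)$ and $g(\cdot,f_{\tilde\bfw}(x)+\bfp)$. Your write-up is in fact slightly more explicit than the paper's about which functions and minimizers the lemma is applied to, but there is no substantive difference.
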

\begin{proof}
    For any $y$ in $\calY(x)$, we have
    \begin{align*}
        &\left|\fh(y,x) - \fe(x,f_{\tilde\bfw}(\Gammah) + \bfp)\right| \\
        &\leq 
        \left|\fh(y,x) - \fe(x,f_{\tilde\bfw}(\Gammah))\right|
        + \left|\fe(x,f_{\tilde\bfw}(\Gammah) - \fe(x,f_{\tilde\bfw}(\Gammah)+ \bfp)\right| \\
        & \leq \alpha u(\Gammah) + \kappa_g \left\|\bfp\right\|_{\beta} 
        % & \leq a u(\Gammah) + \sigma\kappa^{\mathrm{e}}\kappa^{\varphi} \|\bfZ\| \sqrt{|\calE|}.
    \end{align*}
    The results therefore follows from Lemma~\ref{eq:optimumOfTwoFunctionsWithBoundedDifference} and the previous inequality.
\end{proof}

\section{Numerical experiments}
\label{sec:numericExperiments}

This section tests the performance of our algorithms on our running example, the stochastic vehicle scheduling problem, and the $1|r_j|\sum_j C_j$ scheduling problem of Section~\ref{sec:designingAndExamples}.
For the two latter applications, we use the same encoding
$f_{\bfw}$, easy problem solution algorithm, and decoding $\psi$ as in previous contributions \citep{parmentierLearningApproximateIndustrial2019,parmentierScheduling2020}. 
The only difference is that, instead of using the learning by demonstration approaches proposed in these papers, we use the learning by experience approach of this paper.
All the numerical experiments have been performed on a Linux computer running Ubuntu 20.04 with an Intel® Core™ i9-9880H CPU @ 2.30GHz × 16 processor and 64 GiB of memory. 
All the learning problem algorithms are parallelized: The value of the loss on the different instances in the training set are computed in parallel. 
The prediction problem algorithms are not parallelized.

\subsection{Maximum weight two stage spanning tree}
Let us now consider the performance of our pipeline on the maximum weight two stage spanning tree. All the algorithms are implemented in \texttt{julia}. The code to reproduce the numerical experiments is open source\footnote{\url{https://github.com/axelparmentier/MaximumWeightTwoStageSpanningTree.jl}}.

\paragraph{Training, validation and test sets}
We use instances on square grid graphs of width $\{10,\,20,\,30,\,40,\,50,\,60\}$, i.e., with $|V|$ in $\{100, \,400,\,900,\,1600,\,2500,\,3600\}$. 
First stage weig\-hts are uniformly sampled on the integers in $\{-20,\ldots,0\}$. Second stage weights are uniformly sampled on the integers in $\{-K,\ldots,0\}$ with $K \in \{10,\,15,\,20,\,25,\,30\}$. Finally, instances have 5, 10, 15 or 20 second stage scenarios. Our training set, validation set, and test set contain 5 instances for each grid width, weight parameter $K$, and number of scenarios. The training, validation and test sets therefore each contain 600 instances. 

% \begin{wrapfigure}{R}{0.45\textwidth}
%     \begin{center}
%         \includegraphics[width=8cm]{images/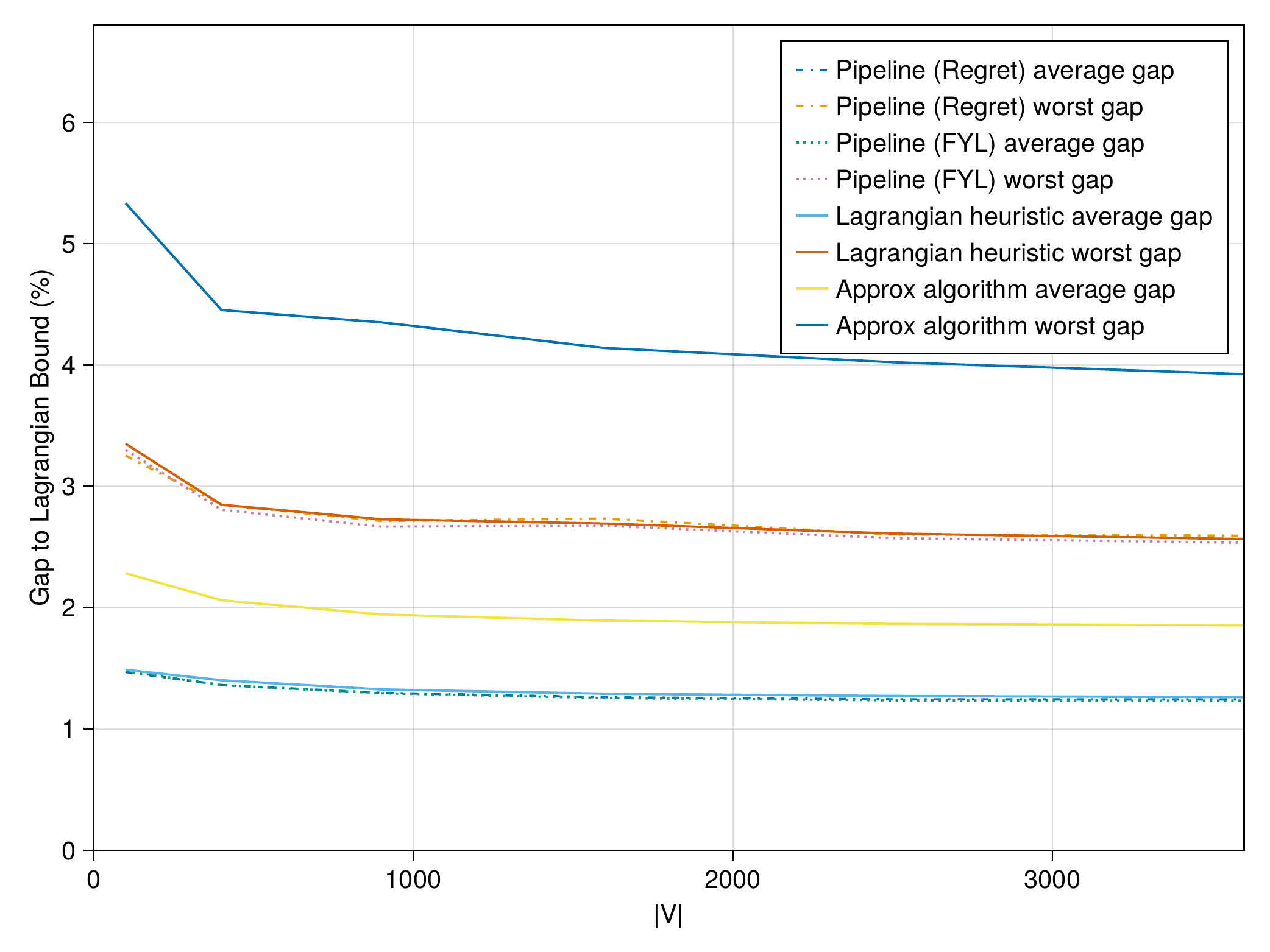}
%     \end{center}
%     \caption{Gap with respect to the Lagrangian relaxation bound as a function of (a) the number of vertices $|V|$ with $\varepsilon = 0.001$.  }
%     \label{fig:maxWeightPerfPerVertices}
% \end{wrapfigure}

\paragraph{Bounds and benchmarks}
On each instance of the training, validation, and test set, we solve the Lagrangian relaxation problem using a subgradient descent algorithm for 50,000 iterations, which provides a lower bound on an optimal solution. We also run a Lagrangian heuristic based on the final value of the duals.

We use three benchmarks to evaluate our algorithms: the Lagrangian heuristic, the approximation algorithm of~\citet{escoffierTwostageStochasticMatching2010}, and our pipeline trained by imitation learning using a Fenchel Young loss to reproduce the solution of the Lagrangian heuristic. Remark that since the approximation algorithm, the pipeline learning by imitation, and the pipeline learned with our loss are all instances of our pipeline, they take roughly the same time. On the contrary the Lagrangian heuristic requires to solve the 50,000 iterations of the subgradient descent algorithm, and is therefore 4 order of magnitude slower.

\begin{figure}[!ht]
    % \begin{tabular}{cc}
        \centering
        \begin{minipage}{0.35\textwidth} 
            \centering        
            \begin{tabular}{cc}
                \toprule 
                $\varepsilon$ & Gap \\
            \midrule
            0.0e+00 & 2.8\% \\
            1.0e-04 & 2.7\% \\
            3.0e-04 & 2.7\% \\
            1.0e-03 & 2.7\% \\
            3.0e-03 & 2.7\% \\
            1.0e-02 & 2.7\% \\
            3.0e-02 & 3.9\% \\
            1.0e-01 & 5.5\% \\
            3.0e-01 & 59.5\% \\      
            \bottomrule
        \end{tabular}
        \\ (a)
    \end{minipage}
        % &
        % \vspace{0cm}
        \begin{minipage}{0.6\textwidth} 
            \centering        
        \includegraphics[width=7.5cm]{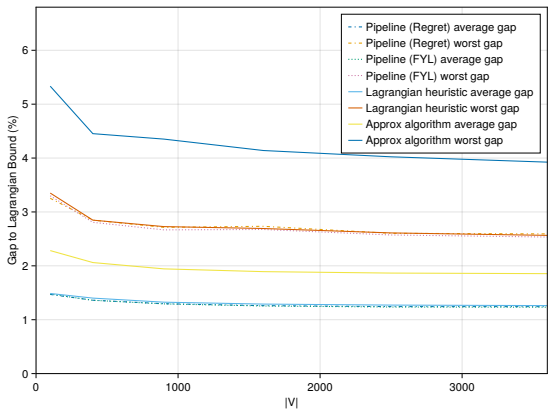} 
        \\ (b)
        \end{minipage}
    %     \\
    %     (a) & (b)
    % \end{tabular}
    % \qquad 
    % (b)
    % \includegraphics[width=8cm]{images/gap_to_lag_bound.pdf}
    \caption{Maximum weight spanning tree. (a) Hyperparameters tuning on the validation dataset for the model learned on the training set. b, Gap with respect to the Lagrangian relaxation bound as a function of (b) the number of vertices $|V|$ with $\varepsilon = 0.001$. }
    \label{fig:MST_hyperparams_perf}
\end{figure}

% \begin{wraptable}{R}{0.35\textwidth}
%     \begin{center}

%     \end{center}
%     \caption{Hyperparameters tuning on large validation dataset for the model learned on the large training set}
%     \label{tab:hyperparameters_tree}
% \end{wraptable}

\paragraph{Hyperparameters tuning}
We use a sample average approximation of our perturbed loss $\ell^{\mathrm{pert}}$ with 20 scenarios. Figure~\ref{fig:MST_hyperparams_perf}.a provides the average value of the gap between the solution returned by our pipeline and the Lagrangian lower bound on the validation set for the model learned with different value of $\varepsilon$. Based on these results, we use $\varepsilon = 0.001$.

\paragraph{Results}
Figure~\ref{fig:MST_hyperparams_perf}.b illustrates the average and worst gap with respect to the Lagrangian bound obtained on the test set for our pipeline and the different benchmarks. The pipeline learning by imitation of by experience enable to match the performance of the Lagrangian heuristic, while being 4 order of magnitude faster. These three algorithms significantly outperform the approximation algorithm.
In summary, our pipeline learned by experience enables to retrieve the performance of the best algorithms as well as the theoretical guarantee of the approximation algorithm.

% \begin{table}
    
%     \caption{Datasets}
% \end{table}

% \begin{table}
    
%     \caption{Performance on small datasets}
% \end{table}

% \begin{table}

%     \caption{Performance on large datasets}
% \end{table}

% Training time (say that the Lagrangian relaxation took roughly one week)

% Maybe two figures giving respectively
% \begin{itemize}
%     \item FYL trained on small dataset (with optimal solution)
%     \item FYL trained on large dataset (with Lagrangian solution)
%     \item Learning by experience on the large dataset (with best )
%     \item Learning by experience on the large dataset with perturbed 
% \end{itemize}

\subsection{Stochastic VSP}

\subsubsection{Setting: Features, post-processing, and instances}
For the numerical experiments on the stochastic VSP, we use the exact same settings as in our previous work \citep{parmentierLearningApproximateIndustrial2019}.
We use the same linear predictor with a vector $\bfphi$ containing 23 features.
And we do not use a post-processing $\psi$.
The easy problem is solved with \texttt{Gurobi 9.0.3} using the LP formulation based on flows.

We also use the same instance generator. 
This generator takes in input the number of tasks $|V|$, the number of scenarios $|\Omega|$ in the sample average approximation, and the seed of the random number generator.
We say that an instance is of moderate size if $|V|\leq 100$, of large size if $100 \leq |V|\leq 750$, and of huge size if $1000\leq |V|$.
Table~\ref{tab:summaryNumericalExperimentsMaster} summarizes the instances generated. 
The first two columns indicate the size of the instances.
A \checkmark{} in the next five columns $|\Omega|$ indicates that instances with $|\Omega|$ scenarios are generated for instance size $|V|$ considered. 
The last five columns detail the composition of the different sets of instances:
Three training sets, one validation set (Val), and a test set (Test).
The table can be read as follows: The training set (small) contains $10 \times |\{50,100,200,500,1000\}| = 50$ instances, each of these having $50$ tasks in $V$, but no larger instances. 
The test set contains instances of all size. For instance, it contains $8 \times |\{50,100,200,500,1000\}| = 40$ instances of size $50$ and $8$ instances of size $5000$.
For the largest sizes, we use only instances with $50$ scenarios for memory reasons: The instances files already weigh several gigabytes.

\begin{table} %{R}{0.7\textwidth}

    \begin{center}
        \scalebox{0.8}{
        \begin{tabular}{lr@{\qquad}ccccc@{\qquad}ccccc}
            \toprule
            \multicolumn{1}{b{0.5cm}}{Size \phantom{200}}
            & \multicolumn{1}{b{0.5cm}}{$|V|$ \phantom{200}}
            & 50 & 100 & \multicolumn{1}{b{0.5cm}}{$|\Omega|$  200} & 500 & 1000
            & \rot{Train (small)}
            & \rot{Train (moderate)}
            & \rot{Train (all)}
            & \rot{Val}
            & \rot{Test}
            \\
            \midrule
            \multirow{3}{*}{Moderate}&50 & \checkmark & \checkmark & \checkmark & \checkmark & \checkmark & 10 & 5 & 1 & 2 & 8 \\
            &75 & \checkmark & \checkmark & \checkmark & \checkmark & \checkmark & & 5 & 1 & 2 & 8 \\
            &100 & \checkmark & \checkmark & \checkmark & \checkmark & \checkmark & & 5 & 1 & 2 & 8 \\
            \midrule
            \multirow{3}{*}{Large}&200 & \checkmark & \checkmark & \checkmark & \checkmark & \checkmark & & 5 & 1 & 2 & 8 \\
            &500 & \checkmark & \checkmark & \checkmark & \checkmark & \checkmark & & & 1 & 2 & 8 \\
            &750 & \checkmark & \checkmark & \checkmark & \checkmark & \checkmark & & & 1 & 2 & 8 \\
            \midrule
            \multirow{3}{*}{Huge}&1000 & \checkmark & \checkmark & \checkmark & \checkmark & \checkmark & & & 1 & 2 & 8 \\
            & 2000 & \checkmark & & & & & & & & 2 & 8 \\
            & 5000 & \checkmark & & & & & & & & 2 & 8 \\
            \bottomrule
        \end{tabular}
        }
    \end{center}
    \caption
    {Instances considered for the stochastic VSP.}
    \label{tab:summaryNumericalExperimentsMaster}

    \end{table}

The small training set, the validation set, and the test set are identical to those previously used~\citep{parmentierLearningApproximateIndustrial2019}.
The validation set, which is used in the learning by demonstration approach, is not used on the learning by experience approach, since we do not optimize on classifiers hyperparameters.
This previous contribution considers only the ``small'' training set, with $50$ instances with $50$ tasks, it uses a learning by demonstration approach and exact solvers cannot handle larger instances.
This is no more a constraint with the learning by experience approach proposed in this paper. We therefore introduce two additional training sets: one that contains 100 instances of moderate size, and one containing 35 instances of all sizes.
These training sets are relatively small in terms of number of instances, but they already lead to significant learning problem computing time and good performance on the test set.

    \subsubsection{Learning algorithm}
    
    On each of the three training sets, we solve the learning problem~\eqref{eq:learningProblem} and the regularized learning problem~\eqref{eq:regularizedLearningProblem}. 
    We use the number of tasks $|V|$ as $u(\Gammah)$. 
    It is not an upper bound on the cost, but the cost of the optimal solution scales almost linearly with $|V|$.
    In both case, we solve the learning problem on the $L_{\infty}$ ball of radius $10$.
    For the regularized learning problem, we use a perturbation strength of intensity $\sigma=1$, and we solve the sample averaged approximation of the problem with $100$ scenarios.
    We evaluate two heuristic algorithms: The DIRECT algorithm~\citep{jonesLipschitzianOptimizationLipschitz1993a} implemented in the \texttt{nlopt} library~\citep{johnsonNLoptNonlinearoptimizationPackage}, and the Bayesian optimization algorithm as it is implemented in the \texttt{bayesopt} library~\citep{martinez-cantinBayesOptBayesianOptimization}. 
    We run each algorithm on $1000$ iterations, which means that they can compute the objective function $1000$ times.
Both algorithms are launched with the default parameters of the libraries.
In particular, the Bayesian optimization algorithm uses the anisotropic kernel with automatic relevance determination \texttt{kSum(kSEARD,kConst)} of the library.

\begin{table} %{R}{0.7\textwidth}

        \begin{center}
            
            \scalebox{0.8}
            {
                \begin{tabular}{llc@{\hspace{0.2cm}}crr@{\hspace{0.2cm}}crr@{\hspace{0.2cm}}c}
                    \toprule
                    \multicolumn{3}{c}{\textbf{Learning problem}} &&
                    \multicolumn{2}{c}{\textbf{DIRECT}} &&
                    \multicolumn{2}{c}{\textbf{Bayes Opt}} 
                    \\
                    \multicolumn{1}{l}{{Obj.}} & 
                    \multicolumn{1}{l}{{Train.~set}} &
                    \multicolumn{1}{l}{pert}&&
                    \multicolumn{1}{c}{CPU time} & \multicolumn{1}{c}{Obj} && CPU time & Obj &\\
                    &&&&(hh:mm:ss)&&&(days, hh:mm:ss)&\\
                    \toprule
                    \multirow{3}{*}{$\ell$}&small&--&&0:01:20&290.39&&0:09:06&287.48\\
                    &moderate&--&&0:10:56&256.39&&0:18:35&259.88\\
                    &all&--&&2:56:40&231.66&&3:56:18&235.04\\
                    \midrule
                    \multirow{3}{*}{$\ell^{\mathrm{pert}}$}&small&100&&0:11:52&286.11&&0:37:22&287.95\\
                    &moderate&100&&1:58:50&258.76&&2:09:17&259.13\\
                    &all&100&&22:44:03&233.84&&1 day, 5:35:30&235.10\\
                    \bottomrule
                    % \% \multicolumn{12}{l}{\strutspace*{-0.4cm}}\
                    % \% \multicolumn{12}{l}{\small Note here}\
                \end{tabular}   
                }             
            \end{center}
        \caption{Performance of the DIRECT \citep{johnsonNLoptNonlinearoptimizationPackage} and Bayesian optimization~\citep{martinez-cantinBayesOptBayesianOptimization} on the learning problems~\eqref{eq:learningProblem} and~\eqref{eq:regularizedLearningProblem} for the stochastic VSP.} 
        \label{tab:vsp-learn-algs}
\end{table}

Table~\ref{tab:vsp-learn-algs} summarizes the result obtained with both algorithms. The first column contains the loss used: $\ell$ for the non-regularized problem~\eqref{eq:learningProblem} and~$\ell^{\mathrm{pert}}$ for the regularized problem.
The next one provides the training set used. 
And the third column provides the number of samples used in the sample average approximation of the perturbation. 
The next four columns give the total computing time for the 1000 iterations and the value of the objective of the learning problem obtained at the end using the DIRECT algorithm and the Bayesian optimization algorithm.

The DIRECT algorithm approximates the value of the function based on a division of the space into hypercubes. 
At each iteration, the function is queried in the most promising hypercube, and the result is used to split the hypercube.
The algorithm leverages a tractable lower bound to identify the most-promising hypercube and the split with few computations. 
Hence, the algorithm is very fast if the function minimized is not computationally intensive.
The Bayesian optimization algorithm builds an approximation of the function minimized: It seeks the best approximation of the function in a reproducing kernel Hilbert space (RKHS) given the data available.
At each iteration, it minimizes an activation function to identify the most promising point according to the model, evaluate the function at that point, and updates the approximation based on the value returned.
Each of these steps are relatively intensive computationally.
Hence, if the function minimized is not computationally intensive, the algorithm will be much slower than the DIRECT algorithm. This is what we observe on the first line of Table~\ref{tab:vsp-learn-algs}.
Furthermore, in \texttt{bayesopt}, the DIRECT algorithm of \texttt{nlopt} is used to minimize the activation function.
Our numerical experiments tend to indicate that the approximation in a RKHS does not enable to find a better solution than the simple exploration with DIRECT after 1000 iterations.
Using Bayesian optimization may however be useful with a smaller iteration budget.
Figure~\ref{fig:compareDirectBayesopt} provides the evolution of the objective function along time for the Bayesian optimization algorithm and the DIRECT algorithm on the learning problem corresponding to the last line of Table~\ref{tab:vsp-learn-algs}.

\begin{figure} %{R}{0.45\textwidth}
    \label{fig:compareDirectBayesopt}
    \begin{center}
        \includegraphics[width=8cm]{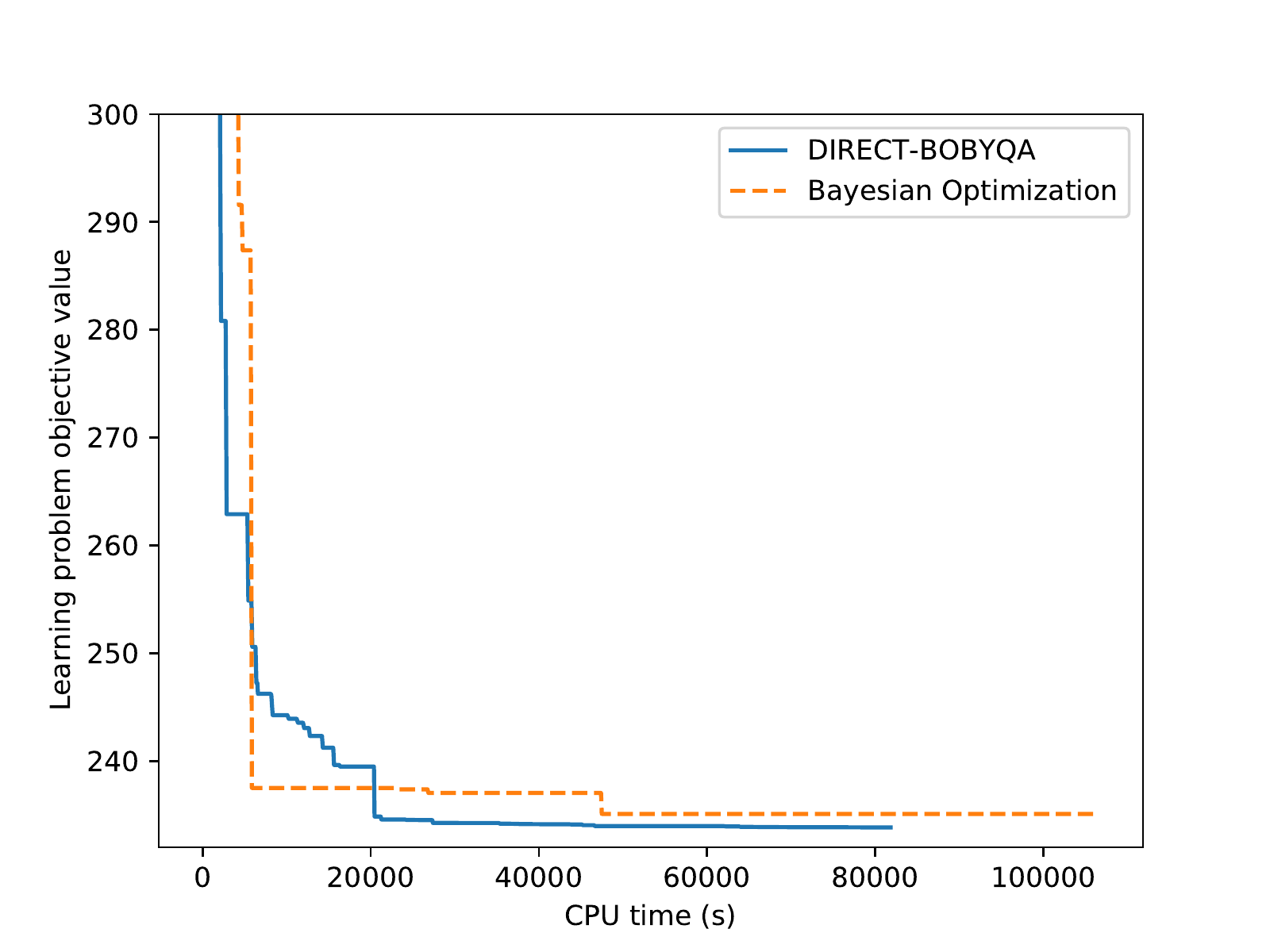}
    \end{center}
    \caption{Learning problem objective value evolution as a function of time on stochastic VSP learning problem with perturbed loss $\ell^{\mathrm{pert}}$ on training set ``all''.}
\end{figure}

Since the DIRECT algorithm gives the best performance on most cases, we keep the $\bfw$ returned by this algorithm for the numerical experiments on the test set. The performance of the DIRECT algorithm could be improved using the optimization on the seed that will be introduced in Section~\ref{ssub:schedulingLearningResults}.

\subsubsection{Algorithm performance on test set}

We now evaluate the performance of our solution pipeline with the $\bfw$ learned.
It has been shown \citep{parmentierLearningApproximateIndustrial2019} that, using  solution pipeline with the $\bfw$ learned by the structured learning approach with 
a conditional random field (CRF) loss on the small training set gives a state-of-the-art algorithm for the problem (the paper uses the maximum likelihood terminology instead of CRF loss).
We therefore use it as a benchmark of the problem.
We have also introduced a new learning by demonstration approach on the problem: We implement the Fenchel Young loss (FYL) structured learning approach \citep{parmentierScheduling2020,berthetLearningDifferentiablePerturbed2020} to obtain a second benchmark.

\begin{table}

    \scalebox{0.57}
    {
        \begin{tabular}{ccc@{\hspace{0.2cm}}crrrr@{\hspace{0.2cm}}crrrr@{\hspace{0.2cm}}crrrr@{\hspace{0.2cm}}crrrr@{\hspace{0.2cm}}c}
            \toprule
            \multicolumn{3}{c}{Learning problem $\bfw$} && 
            \multicolumn{4}{c}{\textbf{Moderate}} &&
            \multicolumn{4}{c}{\textbf{Large}} &&
            \multicolumn{4}{c}{\textbf{Huge}} &&
            \multicolumn{4}{c}{\textbf{All}}  \\
            \multicolumn{1}{c}{Obj} & 
            \multicolumn{1}{c}{Train.~set} &
            \multicolumn{1}{c}{Pert}
            &&
            \multicolumn{1}{c}{$T^{\mathrm{avg}}$} & \multicolumn{1}{c}{$\frac{T^{\mathrm{max}}}{T^{\mathrm{avg}}}$} &\multicolumn{1}{c}{$\delta^{\mathrm{avg}}$} &\multicolumn{1}{c}{$\delta^{\mathrm{max}}$} &&
            \multicolumn{1}{c}{$T^{\mathrm{avg}}$} & \multicolumn{1}{c}{$\frac{T^{\mathrm{max}}}{T^{\mathrm{avg}}}$} &\multicolumn{1}{c}{$\delta^{\mathrm{avg}}$} &\multicolumn{1}{c}{$\delta^{\mathrm{max}}$} &&
            \multicolumn{1}{c}{$T^{\mathrm{avg}}$} & \multicolumn{1}{c}{$\frac{T^{\mathrm{max}}}{T^{\mathrm{avg}}}$} &\multicolumn{1}{c}{$\delta^{\mathrm{avg}}$} &\multicolumn{1}{c}{$\delta^{\mathrm{max}}$} &&
            \multicolumn{1}{c}{$T^{\mathrm{avg}}$} & \multicolumn{1}{c}{$\frac{T^{\mathrm{max}}}{T^{\mathrm{avg}}}$} &\multicolumn{1}{c}{$\delta^{\mathrm{avg}}$} &\multicolumn{1}{c}{$\delta^{\mathrm{max}}$} &\\
            \midrule
            CRF & small & --&&0.03&2.14&9.47\%&20.47\%&&1.21&1.47&2.58\%&6.77\%&&27.69&1.09&1.27\%&1.88\%&&5.74&2.14&5.13\%&20.47\%\\
            FYL & small & 100&&0.03&2.02&1.67\%&4.23\%&&0.97&1.52&0.70\%&2.10\%&&19.20&1.15&0.26\%&1.06\%&&4.04&2.02&1.01\%&4.23\%\\
            \midrule
            \multirow{3}{*}{$\ell$} & small & --&&0.03&2.33&4.37\%&10.35\%&&0.82&1.47&3.65\%&5.56\%&&20.42&1.26&3.29\%&4.90\%&&4.21&2.33&3.88\%&10.35\%\\
             & moderate & --&&0.03&1.77&\textbf{0.31\%}&3.24\%&&0.86&1.39&1.09\%&2.92\%&&17.48&1.13&2.85\%&6.18\%&&3.67&1.77&1.10\%&6.18\%\\
             & all & --&&0.03&1.56&0.52\%&\textbf{1.98\%}&&0.84&1.33&\textbf{0.07\%}&\textbf{0.86\%}&&18.10&1.14&\textbf{0.07\%}&\textbf{0.66\%}&&3.78&1.56&\textbf{0.25\%}&\textbf{1.98\%}\\
            \midrule
            \multirow{3}{*}{$\ell^{\mathrm{pert}}$} & small & 100&&0.03&1.81&2.90\%&6.71\%&&0.84&1.53&2.55\%&4.48\%&&16.29&1.99&2.04\%&3.61\%&&3.44&1.99&2.59\%&6.71\%\\
             & moderate & 100&&0.03&1.98&1.56\%&5.00\%&&0.86&1.42&0.77\%&2.12\%&&16.79&1.09&0.90\%&1.82\%&&3.54&1.98&1.11\%&5.00\%\\
             & all & 100&&0.03&1.63&1.05\%&3.57\%&&0.87&1.45&1.10\%&2.83\%&&18.08&1.30&1.16\%&2.22\%&&3.79&1.63&1.09\%&3.57\%\\
            \bottomrule 
            \multicolumn{24}{l}{\small The best results are in bold. CRF = Conditional Random Field}\
        \end{tabular}
    }

    \caption{Performance of our solution algorithm with different $\bfw$ on the stochastic VSP test set.}
    \label{tab:vsp-alg-perf}
\end{table}

Table~\ref{tab:vsp-alg-perf} summarizes the results obtained.
The first three columns indicate how $\bfw$ has been computed: They provide the loss minimized as objective of the learning problem (Obj), the training set used, and for the approaches that use a perturbation, the number of scenarios used in the sample average approximation (SAA).

The next columns provide the results on the test set.
These columns are divided into four blocks giving results on the subsets moderate, large, huge instances of the test set and on the full test set. 
On each of these subsets of instances, we provide four statistics. 
The statistic $T^{\mathrm{avg}}$ provides the average computing time for our full solution pipeline on the subset of instances considered, which includes the computation of the features and $\varphi_{\bfw}(\Gammah)$, and the resolution of the easy problem with the LP solver (no decoding $\psi$ is used).
Most of this time is spent in the LP solver.
Then, for each instance in the training set, we compute the ratio of the computing time for the instance divided by the average computing time for all the instances of the test set with the same number of tasks $|V|$. Indeed, we expect instances with the same $|V|$ to be of comparable difficulty.
The column $\frac{T^{\mathrm{max}}}{T^{\mathrm{avg}}}$ gives the maximum value of this ratio on the subset of instances considered.
Since we do not have an exact algorithm for the problem, for each instance $\Gammah$ we compute the gap
\begin{equation}\label{eq:gap}
    \frac{c_{\bfw} - c^{\mathrm{best}}}{c^{\mathrm{best}}}
\end{equation}
between the cost $c_{\bfw}$ of the solution returned by our solution pipeline with the $\bfw$ evaluated and the cost of the best solution found for these instances using all the algorithms tested.
The columns $\delta^{\mathrm{avg}}$ and  $\delta^{\mathrm{max}}$ respectively provide the average and the maximum value of this gap on the set of instances considered. 
The two first lines provide the result obtained with the learning by demonstration benchmarks, and the next six ones obtained with the $\bfw$ obtained with the learning algorithms of Table~\ref{tab:vsp-learn-algs}.

We can conclude from these experiments that:
\begin{enumerate}
    \item When using the learning by demonstration approach, the Fenchel Young loss leads to better performances than the conditional random field loss.
    \item Our learning by experience formulation gives slightly weaker performances than the learning by demonstration approach with a Fenchel Young loss when using the same training set.
    \item Our learning by experience approach enables to use a more diversified training set, which enables it to outperform all the previously known approaches. The more diversified the training set, the better the performance.
    \item The regularization by perturbation used tends to decrease the performance of the algorithm. This statement may no longer hold if we optimized the strength of the perturbation using a validation set.
\end{enumerate}
% The last .

\subsection{Single machine scheduling problem $1|r_j|\sum_j C_j$}

\subsubsection{Setting}
\label{ssub:schedulingSetting}

We use the exact same setting as the previous contribution on this problem~\citep{parmentierScheduling2020}. 
In particular, that paper introduces a vector of 66 features, and a subset of 27 features that leads to better performances.
With the objective of testing what our learning algorithm can do on a larger dimensional problem, we focus ourselves on the problem with 66 features.
And we also use the four kinds of decoding algorithms in that paper: no decoding (no $\psi$), a local search (LS), the same local search followed by release date improvement (RDI) algorithm (RDI $\circ$ LS), and the perturbed versions of the last algorithm, (pert RDI $\circ$ LS) where the solution pipeline is applied with $\bfw + \bfZ$ for 150 different samples of a standard Gaussian $\bfZ$, and keep the best solution found. RDI is a classic heuristic for scheduling problems, which is more time consuming but more efficient than the local search.

% \subsubsection{Instances}
We use the same generator of instances as previous contributions \citep{DT02,parmentierScheduling2020}. For a given instance with $n$ jobs, processing times $p_j$ are drawn at random following the uniform distribution $[1;100]$ and release dates $r_j$ are drawn at random following the uniform distribution $[1;50.5 \, n  \rho]$. Parameter $\rho$ enables to generate instances of different difficulties: We consider $\rho\in\{0.2, 0.4, 0.6, 0.8, 1.0,\allowbreak 1.25, 1.5, 1.75, 2.0, 3.0\}$. For each value of $n$ and $\rho$, $N$ instances are randomly generated leading for a fixed value of $n$ to $10N$ instances. 
For the learning by demonstration approach, we use the same training set as~\citep{parmentierScheduling2020} with $n \in \{50, 70, 90, 110\}$ and $N = 100$, leading to a total of $4000$ instances.
We do not use larger instances because we do not have access to optimal solutions for larger instances.
For the learning by experience approach, we use $n \in \calN := \{50,75,100,150,200,300,500,750,1000,1500,2000,3000\}$ and $N=20$, leading to $2400$ instances. 
In the test set, we use a distinct set of $2400$ instances with $n \in\calN$ and $N=20$. This test set is almost identical to the one used in the literature~\citep{parmentierScheduling2020}, the only difference being that instances with $n = 2500$ have been replaced by instance with $n=3000$ to get a more balanced test set.
Table~\ref{tab:schedulingTestSet} shows how we have partitioned this test set by number of jobs $n$ in the instances, to get sets of instances of moderate, large, and huge size.

\begin{table}

        \begin{center}
            
            \scalebox{0.9}{
                \begin{tabular}{lccc}
                    \toprule
                    &\multicolumn{3}{c}{Subsets of instances}
                    \\
                    & Moderate
                    & Large
                    & Huge
                    \\
                    \midrule
                    Size $n$ of instances in subset
                    & $\{50,75,100,150\}$
                    & $\{200,300,500,750\}$
                    & $\{1000,1500,2000,3000\}$
                    \\
                    \bottomrule
                \end{tabular}
                }
            \end{center}

            \caption{Size of the $1|r_j|\sum_j C_j$ instances in the subsets of the test set.}
            \label{tab:schedulingTestSet}
            \end{table}

    \subsubsection{Learning algorithm}
\label{ssub:schedulingLearningResults}

\begin{table} %{R}{0.6\textwidth}

    \begin{center}
        
        \scalebox{0.8}{
            \begin{tabular}{cccc@{\hspace{0.3cm}}rrr}
                \toprule
                %     \multicolumn{4}{c}{Learning problem $\bfw$}&&
                %     \multicolumn{11}{c}{Test set results (with several $\psi$)} \\
                %     &&&&&
                %     \multicolumn{2}{c}{no $\psi$} &&
                %     \multicolumn{2}{c}{LS} &&
                %     \multicolumn{2}{c}{RDI $\circ$ LS} &&
                %     % \multicolumn{2}{c}{LS pert} &&
                %     \multicolumn{2}{c}{pert RDI $\circ$ LS} 
                % \\
                obj&
                iter&
                pert&
                $\psi$ &
                \multicolumn{1}{c}{Tot.~CPU} &
                Avg $\hat L$ &
                Best $\hat L$
                % \\
                % &&&&
                % \multicolumn{1}{c}{(days, hh:mm:ss)} &
                \\
                \toprule
                $\ell$ & 1000 & -- & -- & 0:05:46 & 36.71 & 35.19 \\
                $\ell$ & 2500 & -- & -- & 0:12:31 & 36.64 & 35.29 \\
                $\ell^{\mathrm{pert}}$ & 1000 & 100 & -- & 9:09:06 & 36.76 & 35.27 \\
                $\ell^{\mathrm{pert}}$ & 2500 & 100 & -- & 20:45:52 & 36.69 & 35.25 \\
                $\ell$ & 1000 & -- & LS & 0:52:54 & 35.13 & 35.06 \\
                $\ell$ & 2500 & -- & LS & 1:47:42 & 35.12 & 35.06 \\
                $\ell^{\mathrm{pert}}$ & 1000 & 100 & LS & 3 days, 15:14:42 & 35.12 & 35.06 \\
                $\ell^{\mathrm{pert}}$ & 2500 & 100 & LS & 7 days, 15:24:08 & 35.11 & 35.06 \\        
                \bottomrule
                \multicolumn{7}{l}{\small Tot.~CPU is given in days, hh:mm:ss.}
            \end{tabular}
            }
        \end{center}
        \caption{Learning algorithm results on $1|r_j|\sum_j C_j$. }
        \label{tab:learningScheduling}
        \end{table}

We use $n(n+1)$ as $u(\Gammah)$. 
It is not an upper bound on the cost, but the cost of the optimal solution scales roughly linearly with $u(\Gammah)$.
We draw lessons from the stochastic VSP and use only a diverse training set of $4000$ instances of all size in the training set.
And because our solution pipeline for $1|r_j|\sum_jC_j$ is much faster than the one for the stochastic vehicle scheduling problem, we can use a larger training set.
% This much larger training set is made possible by the fact that our solution pipeline for the $1|r_j|\sum_j C_j$ problem than the one of the stochastic VSP due to a simpler easy problem.
And we introduce two new perspectives.
First, the DIRECT algorithm uses a random number generator. We observed that its performance is very dependent on the seed of the random number generator, and that using a larger number of iterations does not necessarily compensate for the poor performance that would come from a bad seed.
We therefore launch the algorithm with 10 different seeds, each time with a 1000 iterations budget, and report the best result.
Second, as underlined in Section~\ref{sub:practicalRemarks}, it can be natural to use the loss $\ell^{\psi}$ where, instead of using the output of the easy problem, we use the output of the post-processing $\psi$.
In our case, the post-processing is in two steps: first the local search, second the RDI heuristic. Since RDI is time-consuming, using it would lead to very large computing times on the training set used. 
We therefore take the solution at the end of the local search.
%  the DIRECT algorithm to min

        Table~\ref{tab:learningScheduling} summarizes the results obtained.
        The first column indicate if the perturbed loss or the non-perturbed loss has been used. The second indicates the number of iterations of DIRECT used. The third column indicates the number of scenarios used in the sample average approximation when the perturbed loss is used. And ``--'' (resp.~LS) in the fourth column indicates if no (resp~the local search) post-processing has been applied to the solution used in the loss.
        The column Tot.~CPU then provides the total CPU time of the 10 runs of DIRECT with different seeds. Finally, the columns Avg $\hat L$ and Best $\hat L$ give respectively the average and the best loss value of the best solution found by DIRECT algorithm on the 10 seeds used.
        
        We can conclude from these results that optimizing on the seed seems a good idea. We also observe that the loss function after the local search is smaller, which is natural given that the local search improves the solution found by the easy problem.

        \subsubsection{Algorithm performance on test set}
        
\begin{table}

    \scalebox{0.9}{
    \begin{tabular}{cccc@{\hspace{0.3cm}}crr@{\hspace{0.2cm}}crr@{\hspace{0.2cm}}crr@{\hspace{0.2cm}}crr}
    \toprule
        \multicolumn{4}{c}{Learning problem $\bfw$}&&
        \multicolumn{11}{c}{Test set results (with several $\psi$)} \\
        &&&&&
        \multicolumn{2}{c}{no $\psi$} &&
        \multicolumn{2}{c}{LS} &&
        \multicolumn{2}{c}{RDI $\circ$ LS} &&
        % \multicolumn{2}{c}{LS pert} &&
        \multicolumn{2}{c}{pert RDI $\circ$ LS}  
    \\
        obj&
        iter&
        pert&
        $\psi$ &&
        \multicolumn{1}{c}{$\delta^{\mathrm{avg}}$} &\multicolumn{1}{c}{$\delta^{\mathrm{max}}$} &&
        \multicolumn{1}{c}{$\delta^{\mathrm{avg}}$} &\multicolumn{1}{c}{$\delta^{\mathrm{max}}$} &&
        \multicolumn{1}{c}{$\delta^{\mathrm{avg}}$} &\multicolumn{1}{c}{$\delta^{\mathrm{max}}$} &&
        \multicolumn{1}{c}{$\delta^{\mathrm{avg}}$} &\multicolumn{1}{c}{$\delta^{\mathrm{max}}$} 
    \\
    \toprule
    FYL & -- & -- & --&& 1.81\%& 8.57\%&& 1.10\%& 6.88\%&& 0.07\%& 3.41\%&& 0.02\%& 0.46\%\\
    % fyl-27 & -- & -- & --&& 1.47\%& 9.36\%&& 0.95\%& 6.68\%&& 0.07\%& 2.05\%&& 0.01\%& 0.43\%\\
    \midrule
    $\ell$ & 1000 & -- & --&& 0.63\%& 24.53\%&& 0.34\%& 4.59\%&& 0.06\%& 1.65\%&& 0.02\%& 1.53\%\\
    $\ell$ & 2500 & -- & --&& 1.08\%& 21.19\%&& 0.30\%& 6.33\%&& 0.07\%& 1.71\%&& 0.04\%& 1.71\%\\
    $\ell^{\mathrm{pert}}$ & 1000 & 100 & --&& 0.83\%& 23.61\%&& 0.38\%& 3.64\%&& 0.06\%& 1.65\%&& 0.03\%& 1.37\%\\
    $\ell^{\mathrm{pert}}$ & 2500 & 100 & --&& 0.75\%& 19.54\%&& 0.33\%& 3.98\%&& 0.06\%& 1.69\%&& 0.02\%& 1.37\%\\
    \midrule
    $\ell$ & 1000 & -- & LS&& 10.51\%& 54.67\%&& 0.02\%& 1.30\%&& 0.01\%& 1.12\%&& 0.01\%& 1.12\%\\
    $\ell$ & 2500 & -- & LS&& 10.16\%& 55.70\%&& 0.02\%& 1.30\%&& 0.01\%& 1.12\%&& 0.01\%& 1.12\%\\
    $\ell^{\mathrm{pert}}$ & 1000 & 100 & LS&& 10.54\%& 55.22\%&& 0.03\%& 2.26\%&& 0.02\%& 2.26\%&& 0.02\%& 2.26\%\\
    $\ell^{\mathrm{pert}}$ & 2500 & 100 & LS&& 10.51\%& 53.64\%&& 0.03\%& 2.26\%&& 0.02\%& 2.26\%&& 0.02\%& 2.26\%\\
    \bottomrule
    \end{tabular}
    }
    \caption{Performance of our solution algorithms with different $\bfw$ on the $1|r_j|\sum_j C_j$ test set.}
    \label{tab:schedulingFullTestSet}
\end{table}

Table~\ref{tab:schedulingFullTestSet} summarizes the results obtained with the different $\bfw$ on the full test set.
The first line corresponds to the Fenchel young loss (FYL) of the learning by demonstration approach previously proposed~\citep{parmentierScheduling2020}, and serves as a benchmark. 
The next eight ones correspond to the parameters obtained solving the learning problem described in this paper with the settings of Table~\ref{tab:learningScheduling}.
The first four columns describe the parameters of the learning problems used to obtained $\bfw$ and are identical to those of Table~\ref{tab:learningScheduling}.
The next columns indicate the average results on the full test set for the four kind of post-processing described in Section~\ref{ssub:schedulingSetting}.
Again, we provide the average $\delta^{\mathrm{avg}}$ and the worse $\delta^{max}$ values of the gap~\eqref{eq:gap} between the solution found by the algorithm and the best solution found by all the algorithms.

Two conclusions can be drawn from these results:
\begin{enumerate}
    \item The solution obtained with our loss by experience approach tend to outperform on average those obtained using the Fenchel Young loss, but tend to have a poorer worst case behavior.
    \item Using the loss with post-processing tend to improve the performance on the test set with the pipelines that use this preprocessing, and possibly other after. But it decreases the performance on the pipeline which do not use it.
\end{enumerate}

\begin{table}

    \scalebox{0.75}
    {
        \begin{tabular}{l@{\hspace{0.2cm}}ccccc@{\hspace{0.2cm}}crrrr@{\hspace{0.2cm}}crrrr@{\hspace{0.2cm}}crrrr}
            \toprule
            Pred. &&
            \multicolumn{4}{c}{$\bfw$} && 
            \multicolumn{3}{c}{\textbf{Moderate}} &&
            \multicolumn{3}{c}{\textbf{Large}} &&
            \multicolumn{3}{c}{\textbf{Huge}} \\
            $\psi$
            &&
            obj&
            iter&
            $|\Omega|$&
            $\psi$ 
            &&
            \multicolumn{1}{c}{$T^{\mathrm{avg}}$} & \multicolumn{1}{c}{$\delta^{\mathrm{avg}}$} &\multicolumn{1}{c}{$\delta^{\mathrm{max}}$} 
            &&
            \multicolumn{1}{c}{$T^{\mathrm{avg}}$} & \multicolumn{1}{c}{$\delta^{\mathrm{avg}}$} &\multicolumn{1}{c}{$\delta^{\mathrm{max}}$} 
            &&
            \multicolumn{1}{c}{$T^{\mathrm{avg}}$} & \multicolumn{1}{c}{$\delta^{\mathrm{avg}}$} &\multicolumn{1}{c}{$\delta^{\mathrm{max}}$} 
            \\
            \midrule

            \multirow{9}{*}{no $\psi$} && FYL & -- & -- & --&&0.01&1.13\%&8.57\%&&0.40&1.80\%&6.06\%&&102.82&2.50\%&6.47\%\\
            && $\ell$ & 1000 & -- & --&&0.01&1.38\%&24.53\%&&0.20&0.39\%&2.33\%&&25.03&0.11\%&0.54\%\\
            && $\ell$ & 2500 & -- & --&&0.01&2.59\%&21.19\%&&0.15&0.52\%&3.61\%&&22.85&0.14\%&0.96\%\\
            && $\ell^{\mathrm{pert}}$ & 1000 & 100 & --&&0.01&1.51\%&23.61\%&&0.25&0.66\%&4.27\%&&31.95&0.33\%&2.23\%\\
            && $\ell^{\mathrm{pert}}$ & 2500 & 100 & --&&0.01&1.25\%&19.54\%&&0.18&0.59\%&3.46\%&&15.91&0.41\%&2.00\%\\
            && $\ell$ & 1000 & -- & LS&&0.01&10.19\%&54.67\%&&0.07&10.64\%&51.23\%&&2.04&10.70\%&46.99\%\\
            && $\ell$ & 2500 & -- & LS&&0.01&10.31\%&55.70\%&&0.07&10.24\%&50.63\%&&2.20&9.93\%&43.87\%\\
            && $\ell^{\mathrm{pert}}$ & 1000 & 100 & LS&&0.01&10.28\%&55.22\%&&0.07&10.64\%&49.59\%&&2.47&10.70\%&46.04\%\\
            && $\ell^{\mathrm{pert}}$ & 2500 & 100 & LS&&0.01&10.01\%&53.64\%&&0.07&10.63\%&49.27\%&&2.45&10.90\%&46.36\%\\
            \midrule
            \multirow{9}{*}{\shortstack{pert \\ RDI $\circ$ LS}}  && FYL & -- & -- & --&&0.36&0.02\%&0.46\%&&2.58&0.02\%&0.24\%&&208.72&0.02\%&0.16\%\\
            && $\ell$ & 1000 & -- & --&&0.48&0.05\%&1.53\%&&2.49&0.02\%&0.62\%&&50.98&0.00\%&0.10\%\\
            && $\ell$ & 2500 & -- & --&&0.50&0.09\%&1.71\%&&2.44&0.02\%&0.34\%&&45.37&0.00\%&0.10\%\\
            && $\ell^{\mathrm{pert}}$ & 1000 & 100 & --&&0.49&0.05\%&1.37\%&&2.61&0.02\%&0.66\%&&65.53&0.00\%&0.11\%\\
            && $\ell^{\mathrm{pert}}$ & 2500 & 100 & --&&0.48&0.05\%&1.37\%&&2.54&0.02\%&0.61\%&&40.28&0.00\%&0.10\%\\
            && $\ell$ & 1000 & -- & LS&&0.48&0.03\%&1.12\%&&2.34&0.01\%&0.27\%&&14.06&0.00\%&0.02\%\\
            && $\ell$ & 2500 & -- & LS&&0.49&0.03\%&1.12\%&&2.32&0.00\%&0.14\%&&14.19&0.00\%&0.01\%\\
            && $\ell^{\mathrm{pert}}$ & 1000 & 100 & LS&&0.48&0.05\%&2.26\%&&2.36&0.01\%&0.27\%&&14.44&0.00\%&0.05\%\\
            && $\ell^{\mathrm{pert}}$ & 2500 & 100 & LS&&0.49&0.05\%&2.26\%&&2.39&0.01\%&0.27\%&&14.59&0.00\%&0.05\%\\           
 \bottomrule
 \multicolumn{12}{l}{\small $T^{\mathrm{avg}}$ is given in seconds.}
        \end{tabular} 
    }
    \caption{Influence of instances size on the performance of our solution algorithms with different $\bfw$ on the $1|r_j|\sum_j C_j$ test set.}
    \label{tab:schedulingResultsDetailed}
\end{table}

Finally, Table~\ref{tab:schedulingResultsDetailed} details the results for the fastest  (no $\psi$) and the most accurate one (pert RDI $\circ$ LS) pipeline on the subsets of instances of moderate, large, and huge size.
In addition to the gaps, the average computing time $T^{\mathrm{avg}}$ is provided.
Again, we can observe that:
\begin{enumerate}[resume]
    \item Because the learning by experience approach enables to use a diversified set of instances in the training set, it outperforms the learning by demonstration approach on large and huge instances.
\end{enumerate}

\section{Conlusion}

We have focused on heuristic algorithms for hard combinatorial optimization problems based on machine learning pipelines with a simpler combinatorial optimization problem as layer.
Previous contributions in the literature required training sets with instances and their optimal solutions to train such pipelines. 
We have shown that the solutions are not necessarily needed, and we can learn such pipelines by experience if we formulate the learning problem as a regret minimization problems.
This widens the potential applications of such methods since it removes the need of an alternative algorithm for the hard problem to build the training set.
Furthermore, even when such an algorithm exits, it may not be able to handle large instances.
The learning by experience approach can therefore use larger instances in its training set, and can take into account the effect of potential post-processings. 
These two ingredients enable to scale better on large instances.
Finally, we have shown that, if an approximation algorithm can be encoded in the pipeline with a given parametrization, then the parametrization learned by experience retains the approximation guarantee while giving a more efficient algorithm in practice.

Future contributions may focus on providing richer statistical models in the neural network, which would require to adapt the learning algorithm. Furthermore, the approximation ratio guarantee could be extended to more general settings.

\subsection*{Acknowledgements}
I am grateful to Yohann de Castro and Julien Reygnier for their help on Section~\ref{sec:convergenceEstimator}, 
%to Jean-François Delmas for the proof of Lemma~\ref{lem:boundedPerturbedLipchitz}, 
and to Vincent T'Kindt for his help on the scheduling problem.

\bibliographystyle{plainnat}
\bibliography{ml4or}

\appendix

\newpage
\section{Proof of Theorem~\ref{theo:perturbedBoundedLossLearningRate}}
\label{sec:ProofOfTheoremtheo:perturbedBoundedLossLearningRate}

\subsection{Background on Rademacher complexity and metric entropy method}
    This section introduces some classical tools of statistical learning theory~\citep{bousquetIntroductionStatisticalLearning2004}. The lecture notes of \citep{wolfMathematicalFoundationsSupervised2018} contain detailed proofs.

    We place ourselves in the setting of Section~\ref{sub:backgroundStatisticalLearning}.
    Let $\calF$ be the family of functions $\Big\{\xi \mapsto \ell(\xi,\bfw)\colon \bfw \in \bfW\Big\}$. The \emph{Rademacher complexity of $\calF$ is}
    $$\calR_n(\calF) = \bbE_{\xi_i,\sigma_i}\biggl[\sup_{\bfw \in \bfW}\frac{1}{n}\sum_{i=1}^n \sigma_i \ell(\xi_i,\bfw)\biggr] $$
    where the $\sigma_i$ are i.i.d.~Rademacher variables, i.e., variables equal to $1$ with probability $1/2$, and to $-1$ otherwise.
The following well-known result bounds the excess risk based on the Rademacher complexity.
\begin{prop}\label{prop:rademacherBoundsExcessRisk}
    With probability at least $1-\delta$, we have
    $$ L(\hat \bfw_n) - L(\bfw^*) \leq 4R_n(\calF) + \sqrt{\frac{2\log(2/\delta)}{n}}. $$
\end{prop}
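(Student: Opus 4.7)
The plan is to follow the classical two-step recipe of empirical process theory: first reduce the excess risk to a uniform deviation between empirical and true risk, then control that uniform deviation by concentration plus symmetrization.

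First I would decompose the excess risk using the optimality of $\hat{\bfw}_n$ for $\hat{L}_n$. Writing
\begin{equation*}
L(\hat{\bfw}_n) - L(\bfw^*) = \bigl[L(\hat{\bfw}_n) - \hat L_n(\hat{\bfw}_n)\bigr] + \bigl[\hat L_n(\hat{\bfw}_n) - \hat L_n(\bfw^*)\bigr] + \bigl[\hat L_n(\bfw^*) - L(\bfw^*)\bigr],
\end{equation*}
the middle bracket is nonpositive by definition of $\hat{\bfw}_n$, so the excess risk is bounded by $\sup_{\bfw \in \bfW}(L(\bfw) - \hat L_n(\bfw)) + \sup_{\bfw \in \bfW}(\hat L_n(\bfw) - L(\bfw))$.

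Next I would control each of these suprema by the same argument. Set $\Phi(\xi_1,\ldots,\xi_n) = \sup_{\bfw}(L(\bfw) - \hat L_n(\bfw))$. Because $\ell$ takes values in $[0,1]$, replacing a single $\xi_i$ changes $\Phi$ by at most $1/n$, so McDiarmid's bounded-differences inequality gives, with probability at least $1-\delta/2$,
\begin{equation*}
\Phi \leq \bbE[\Phi] + \sqrt{\tfrac{\log(2/\delta)}{2n}}.
\end{equation*}
Then I would apply the standard symmetrization argument—introduce an independent ghost sample $\xi_1',\ldots,\xi_n'$, rewrite $L(\bfw) = \bbE_{\xi'}[\hat L_n'(\bfw)]$, use Jensen to pull the sup outside the expectation, then insert i.i.d.~Rademacher signs $\sigma_i$ because $\ell(\xi_i,\bfw)-\ell(\xi_i',\bfw)$ is symmetric—to obtain $\bbE[\Phi] \leq 2\calR_n(\calF)$.

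The symmetric supremum $\sup_{\bfw}(\hat L_n(\bfw) - L(\bfw))$ is handled identically, giving a second event of probability at least $1-\delta/2$. A union bound combines the two events with probability at least $1-\delta$, and adding the two bounds yields
\begin{equation*}
L(\hat{\bfw}_n) - L(\bfw^*) \leq 4\calR_n(\calF) + 2\sqrt{\tfrac{\log(2/\delta)}{2n}} = 4\calR_n(\calF) + \sqrt{\tfrac{2\log(2/\delta)}{n}},
\end{equation*}
which is the claimed inequality. There is no real obstacle here: the result is textbook material, and the only care needed is to keep the two one-sided deviations separate so that McDiarmid is applied to functionals whose bounded-differences constant is exactly $1/n$, and so that the union bound absorbs the factor $2$ inside the logarithm rather than outside the square root.
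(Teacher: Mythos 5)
Your proof is correct and is exactly the classical symmetrization--plus--McDiarmid argument; the paper itself gives no proof of this proposition, presenting it as a well-known result and pointing to standard statistical learning references, which contain precisely the argument you reproduce. The decomposition, the $1/n$ bounded-differences constant, the factor $2\calR_n(\calF)$ from symmetrization applied to each one-sided supremum, and the final arithmetic $2\sqrt{\log(2/\delta)/(2n)} = \sqrt{2\log(2/\delta)/n}$ all check out and recover the stated constants.
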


The metric entropy method enables to bound the Rademacher complexity.
The \emph{empirical Rademacher complexity} of $\calF$ is obtained when we replace the expectation over $\xi_i$ by its values for the training set used $\xi_1,\ldots,\xi_n$.
$$\hat \calR_n(\calF) = \bbE\biggl[\sup_{\bfw \in \bfW}\frac{1}{n}\sum_{i=1}^n \sigma_i \ell(\xi_i,\bfw) | \xi_1,\ldots,\xi_n \biggr]  $$
and we have $\calR_n(\calF) = \bbE[\hat \calR_n(\calF)]$.

Given $n$ instances $\xi_1,\ldots,\xi_2$ and the corresponding distribution $\hat \mu_n$ on $\Xi$, the pseudometric $L_2(\hat \mu_n)$ on $\calF$ is the $L_2$ norm induced by $\hat \mu_n$ on $\calF$
$$ \left\|\ell(\cdot,\bfw) - \ell(\cdot,\bfw')\right\|_{2,\hat \mu_n} = \sqrt{\frac{1}{n}\sum_{i=1}^n\left(\ell(\xi_i,\bfw) - \ell(\xi_i,\bfw')\right)^2}$$
We denote by $B_{\varepsilon, L_2(\hat \mu_n)}(\ell(\cdot,\bfw))$ the ball of radius $\varepsilon$ centered in $\ell(\cdot,\bfw)$.
The set covering number of $\calF$ with respect to $L_2(\hat \mu_n)$ is 
$$N(\varepsilon,\calF, L_2(\hat \mu_n)) = \min \Big\{m\colon\exists\{\bfw_1,\bfw_m\}\subseteq \bbR^d, \calF \subseteq \bigcup_{j=1}^m B_{\varepsilon, L_2(\hat \mu_n)}(\ell(\cdot,\bfw_m)) \Big\}.$$
The following result bounds the empirical Rademacher complexity from the covering number.
\begin{prop}\label{prop:Dudley}(Dudley's theorem)
    Let $\calF$ be a family of mapping from $\calZ$ to $[-1,1]$, then
    $$\hat \calR_n(\calF) \leq 12 \int_{0}^{\infty} \sqrt{\frac{\log N(\varepsilon,\calF, L_2(\hat \mu_n))}{n}} d\varepsilon $$
\end{prop}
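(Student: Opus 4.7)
The plan is to prove Dudley's theorem by the classical \emph{chaining} argument. The starting point is the observation that, conditionally on $\xi_1,\ldots,\xi_n$, each Rademacher sum $\frac{1}{n}\sum_i \sigma_i h(\xi_i)$ is sub-Gaussian with proxy variance $\|h\|_{2,\hat\mu_n}^2/n$, so that for any finite family $H$ of functions Massart's finite maximal inequality yields
\begin{equation*}
\bbE\Bigl[\sup_{h \in H} \frac{1}{n}\sum_{i=1}^n \sigma_i h(\xi_i) \Bigm| \xi_1,\ldots,\xi_n\Bigr] \leq \max_{h \in H}\|h\|_{2,\hat\mu_n}\sqrt{\frac{2\log |H|}{n}}.
\end{equation*}
Applied naively to a single $\varepsilon$-net this would produce a bound of order $\varepsilon + \sqrt{\log N(\varepsilon)/n}$, and the residual $\varepsilon$ cannot be removed without iterating the argument. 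Chaining does exactly this: it approximates each $f$ by a telescoping sequence of successively finer net elements, so that the loose $\varepsilon$ term is ultimately replaced by an integral.

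Concretely, I set $\varepsilon_k = 2^{-k}$ for $k \geq 0$. Since elements of $\calF$ take values in $[-1,1]$, we have $\|f\|_{2,\hat\mu_n} \leq 1 = \varepsilon_0$, so the single-element cover $T_0 = \{0\}$ is a valid $\varepsilon_0$-net. For $k \geq 1$, let $T_k$ be a minimal $\varepsilon_k$-cover of $\calF$ for $L_2(\hat\mu_n)$, so that $|T_k| = N(\varepsilon_k,\calF,L_2(\hat\mu_n))$, and let $\pi_k(f)$ be the element of $T_k$ closest to $f$. The telescoping identity
\begin{equation*}
f = \pi_0(f) + \sum_{k=1}^{\infty}\bigl(\pi_k(f) - \pi_{k-1}(f)\bigr)
\end{equation*}
holds in $L_2(\hat\mu_n)$ since $\|\pi_k(f) - f\|_{2,\hat\mu_n}\leq \varepsilon_k \to 0$, and injecting it into the empirical Rademacher average the $k = 0$ term vanishes because $\pi_0(f) = 0$. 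For each $k \geq 1$, the difference $\pi_k(f) - \pi_{k-1}(f)$ lies in a fixed finite set of cardinality at most $|T_k|\cdot|T_{k-1}| \leq N(\varepsilon_k)^2$ with $L_2(\hat\mu_n)$-norm bounded by the triangle inequality by $\varepsilon_k + \varepsilon_{k-1} = 3\varepsilon_k$.

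Applying Massart's inequality at each level then gives
\begin{equation*}
\bbE\Bigl[\sup_{f \in \calF}\frac{1}{n}\sum_i \sigma_i\bigl(\pi_k(f) - \pi_{k-1}(f)\bigr)(\xi_i)\Bigm|\xi_1,\ldots,\xi_n\Bigr] \leq 3\varepsilon_k\sqrt{\frac{4\log N(\varepsilon_k)}{n}} = 6\varepsilon_k\sqrt{\frac{\log N(\varepsilon_k)}{n}},
\end{equation*}
and summing over $k \geq 1$ yields $\hat\calR_n(\calF) \leq \sum_{k \geq 1} 6\varepsilon_k\sqrt{\log N(\varepsilon_k)/n}$. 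The final step is to recognize this dyadic series as at most twice a lower Riemann sum of the target integral: because $\varepsilon \mapsto \log N(\varepsilon,\calF,L_2(\hat\mu_n))$ is non-increasing and $\varepsilon_k - \varepsilon_{k+1} = \varepsilon_k/2$, each term satisfies $\varepsilon_k\sqrt{\log N(\varepsilon_k)/n} \leq 2\int_{\varepsilon_{k+1}}^{\varepsilon_k}\sqrt{\log N(\varepsilon)/n}\,d\varepsilon$; summing telescopes to $2\int_0^{\varepsilon_1} = 2\int_0^{1/2}$, and since the integrand vanishes for $\varepsilon > 1$ the upper limit extends to $\infty$, producing the announced constant $6 \times 2 = 12$.

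I expect the main obstacle to be the constant chase in the last step. The chaining skeleton itself is routine, but recovering exactly $12$ rather than some larger absolute constant requires (i) choosing the dyadic scale so that $T_0 = \{0\}$ actually covers $\calF$, which kills the $k=0$ contribution and starts the sum at $k=1$, (ii) bounding $|T_{k-1}|$ by $|T_k|$ in the Massart step to absorb the two cardinalities into a single $\log N(\varepsilon_k)^2$, and (iii) comparing the dyadic sum to the integral without losing more than a factor $2$ at each scale.
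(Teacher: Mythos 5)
Your chaining proof is correct, and it is the standard argument for this bound. Note that the paper itself does not prove Proposition~\ref{prop:Dudley}: it is stated as a classical result and delegated to the cited references on statistical learning theory, so there is no in-paper proof to compare against. Your constant chase checks out: Massart's lemma at scale $k$ applied to the at most $N(\varepsilon_k)^2$ increments of $L_2(\hat\mu_n)$-norm at most $3\varepsilon_k$ gives $6\varepsilon_k\sqrt{\log N(\varepsilon_k)/n}$, and the comparison $\varepsilon_k\sqrt{\log N(\varepsilon_k)/n}\leq 2\int_{\varepsilon_{k+1}}^{\varepsilon_k}\sqrt{\log N(\varepsilon)/n}\,d\varepsilon$ (valid because $\varepsilon\mapsto N(\varepsilon)$ is non-increasing and $\varepsilon_k-\varepsilon_{k+1}=\varepsilon_k/2$) yields exactly the factor $12$. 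Two points worth making explicit if you write this up in full: the interchange $\sup_f\sum_k\leq\sum_k\sup_f$ is justified because each increment's Rademacher average is bounded by $3\varepsilon_k/\sqrt{n}$ via Cauchy--Schwarz, so the series converges uniformly (this also handles the vanishing tail term $f-\pi_K(f)$ in the telescoping); and your net $T_0=\{0\}$ is external to $\calF$, which is harmless here since it contributes nothing to the Rademacher sum and only enters through the triangle-inequality bound $\|\pi_1(f)-\pi_0(f)\|_{2,\hat\mu_n}\leq\varepsilon_1+\varepsilon_0$.
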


\subsection{Proof of Theorem~\ref{theo:perturbedBoundedLossLearningRate}}

The proof is as follows.
We show that the Gaussian perturbation turns any bounded function in a Lipschitz function.
Hence, the perturbed loss is Lipschitz. 
This implies an upper bound on the covering number, and Dudley's theorem enables to conclude.

Let $\bfZ$ be a centered standard Gaussian vector on $\bbR^d$. 
It is well known that
    \begin{equation}\label{eq:upperBoundExpectationNormGaussian}
        \bbE(\|\bfZ\|) \leq \sqrt{d}.
    \end{equation}
Indeed, applying $u \leq (1+u^2)/2$ with $u = \sqrt{\frac{1}{d}\sum Z_i^2}$ gives $\frac{1}{\sqrt{d}} \|\bfZ\| \leq \frac{1}{2}(1 + \frac{1}{d}\sum_{i=1}^dZ_i^2)$. Taking the expectation and using $\bbE(Z_i^2) = 1$ gives~\eqref{eq:upperBoundExpectationNormGaussian}.

\begin{lem}\label{lem:boundedPerturbedLipchitz}
    Let $g : \bbR^d \rightarrow [0,1]$ be an integrable function, $\bfZ$ a standard normal random vector on $\bbR^d$, $\sigma>0$ a positive real number, and $G(\bfw) = \bbE g(\bfw + \sigma Z)$. Then $\bfw \mapsto G(\bfw)$ is $\frac{\sqrt{d}}{\sigma}$-Lipchitz.
\end{lem}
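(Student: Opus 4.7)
The plan is to write $G$ as a convolution of $g$ with the Gaussian density and transfer the gradient onto the smooth factor, since $g$ itself need not be differentiable. Concretely, I would change variables $y = \bfw + \sigma z$ to express
$$ G(\bfw) = \int g(y)\, \phi_\sigma(y-\bfw)\, dy, \qquad \phi_\sigma(u) = \frac{1}{\sigma^d (2\pi)^{d/2}} \exp\!\Bigl(-\frac{\|u\|^2}{2\sigma^2}\Bigr). $$
Since $\phi_\sigma$ is $C^\infty$ with well-behaved decay and $g$ is bounded (hence integrable against $\phi_\sigma$), I can differentiate under the integral sign with respect to $\bfw$.

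The key computation is the Stein-type identity: $\nabla_\bfw \phi_\sigma(y-\bfw) = \frac{y-\bfw}{\sigma^2}\,\phi_\sigma(y-\bfw)$. Substituting and changing variables back to $z = (y-\bfw)/\sigma$ yields
$$ \nabla G(\bfw) = \frac{1}{\sigma}\,\mathbb{E}\bigl[ g(\bfw + \sigma \bfZ)\,\bfZ \bigr]. $$
From here, using $0 \le g \le 1$, Jensen (or Cauchy--Schwarz) gives
$$ \|\nabla G(\bfw)\| \le \frac{1}{\sigma}\,\mathbb{E}\bigl[|g(\bfw+\sigma\bfZ)|\,\|\bfZ\|\bigr] \le \frac{1}{\sigma}\,\mathbb{E}\|\bfZ\| \le \frac{\sqrt d}{\sigma}, $$
where the last inequality is exactly \eqref{eq:upperBoundExpectationNormGaussian}. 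Since this bound on $\|\nabla G\|$ holds uniformly in $\bfw$, the mean value theorem (applied coordinate-wise along a segment) yields the claimed $\frac{\sqrt d}{\sigma}$-Lipschitz property.

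The only subtle step is justifying the exchange of gradient and integral; this is routine because $g$ is bounded and $\phi_\sigma$ together with its gradient decay faster than any polynomial, so dominated convergence applies on any bounded neighborhood of $\bfw$. No serious obstacle is expected; the whole argument is essentially the standard Gaussian smoothing / Stein-type lemma, specialized to bound the Lipschitz constant by $\mathbb{E}\|\bfZ\|/\sigma$ rather than the usual $\sqrt{d}/\sigma$ that would follow from assuming a Lipschitz $g$.
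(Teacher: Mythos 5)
Your proof is correct and follows essentially the same route as the paper's: write $G$ as a convolution with the Gaussian density, move the gradient onto the density via dominated convergence, use $\nabla\phi_\sigma(u) = -\frac{u}{\sigma^2}\phi_\sigma(u)$, and bound the result by $\frac{1}{\sigma}\bbE\|\bfZ\| \leq \frac{\sqrt{d}}{\sigma}$ using the bound~\eqref{eq:upperBoundExpectationNormGaussian}. The only cosmetic difference is that you keep $g$ inside the expectation before bounding it by $1$, whereas the paper bounds $|g|\leq 1$ first and integrates $\|\nabla h\|$ directly; the substance is identical.
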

\begin{proof}
    Let $h$ be the density of $\tilde \bfZ = \sigma\bfZ$. We have
    $$ G(\bfw) = \int h(\bfz)g(\bfz+\bfw) = \int h(\bfz-\bfw)g(\bfz) $$
    By dominated convergence, we have
    $$ \nabla G(\bfw) = - \int \nabla h(\bfz-\bfw)g(\bfz) = - \int \nabla h(\bfz)g(\bfz + \bfw)  $$
    From there, using the facts that $|g(\bfw)| \leq 1$ and $\bfZ$ is a standard Gaussian, we get
    $$ \|\nabla G(\bfw)\| \leq \int \|\nabla h(\bfz)\| = \int \|\frac{\bfz }{\sigma^2(\sqrt{2\pi\sigma})^n}e^{-\frac{\|\bfz\|^2}{2\sigma^2}}\| = \frac{\bbE(\|\tilde\bfZ\|) }{\sigma^2}
    = \frac{1}{\sigma} \bbE(\|\bfZ\|) \leq \frac{\sqrt{d}}{\sigma}
    % \leq d \frac{\bbE(|Y_1|) }{\sigma^2} = \frac{d}{\sigma}\sqrt{\frac{2}{\pi}},
    $$
    which gives the result. 
\end{proof}
Given an arbitrary element $\xi$ in $\Xi$,
Lemma~\ref{lem:boundedPerturbedLipchitz} applied with $g = \ell(\xi,\cdot)$ gives
$$ |\ell(\xi,\bfw) - \ell(\xi,\bfw')| \leq \frac{\sqrt{d}}{\sigma} \|\bfw - \bfw'\|_2 $$
Hence
$$  \left\|\ell(\cdot,\bfw) - \ell(\cdot,\bfw')\right\|_{2,\hat \mu_n} =  \sqrt{\frac{1}{n}\sum_{i=1}^n\left(\ell(\xi_i,\bfw) - \ell(\xi_i,\bfw')\right)^2} \leq \frac{\sqrt{d}}{\sigma} \|\bfw - \bfw'\|.$$
As a consequence, if $\bfw_1,\ldots,\bfw_m$ is an $\frac{\varepsilon\sigma}{\sqrt{d}}$ covering of $\bfW$ endowed with the Euclidean norm, then $\ell(\cdot,\bfw_1),\ldots,\ell(\cdot,\bfw_m)$ is an $\varepsilon$ covering of $\calF$.
Hence, if $\bfW$ is contained in the Euclidean ball of radius $M$, we get
$$N(\varepsilon,\calF,L_2(\hat \mu_n)) \leq N(\frac{\varepsilon\sigma}{\sqrt{d}}, \bfW = \bbB^d(M),\|\cdot\|_2) \leq \left(\frac{M\sqrt{d}}{\varepsilon\sigma}\right)^d $$
for $\varepsilon \leq \frac{M\sqrt{d}}{\sigma}$ and $ N(\varepsilon,\calF,L_2(\hat \mu_n)) = 1$ otherwise.
And we obtain
$$ \log N(\varepsilon,\calF,L_2(\hat \mu_n)) \leq d \bigl(\log(M\sqrt{d}/\sigma) - \log{\varepsilon })\bigr)$$
for $\varepsilon \leq \frac{M\sqrt{d}}{ \sigma}$ and $ \log N(\varepsilon,\calF,L_2(\hat \mu_n))= 0$ otherwise.

Proposition~\ref{prop:Dudley} then gives
\begin{align*}
    \hat R_n(\calF) \leq 12\int_{0}^{\frac{M\sqrt{d}}{\sigma}} \sqrt{d\frac{\log(\frac{M\sqrt{d}}{\sigma}) - \log \varepsilon}{n}} d\varepsilon 
    = 12 \sqrt{\frac{d}{n}} \int_{0}^{\frac{M\sqrt{d}}{\sigma}} \sqrt{-\log \Big( \frac{\varepsilon}{M\sqrt{d}/\sigma} \Big)}d\varepsilon
    =\frac{C}{4}\frac{Md}{\sigma\sqrt{n}} 
\end{align*}
with $C =48\int_0^1\sqrt{-\log{x}}dx$.
Remark that the bound on $\hat R_n(\calF)$ we obtain does not depend on the sample $\xi_1,\ldots,\xi_n$, and is therefore also valid for $R_n(\calF) = \bbE\big(\hat R_n(\calF)\big)$.
Proposition~\ref{prop:rademacherBoundsExcessRisk} then gives Theorem~\ref{theo:perturbedBoundedLossLearningRate}.

\end{document}